\def\zetab{\boldsymbol{\zeta}}
\def\Kcal{\mathcal{K}}
\def\Scal{\mathcal{S}}
\def\w{\mathbf{w}}
\def\Vcal{\mathcal{V}}
\def\Jcal{\mathcal{J}}
\def\Ical{\mathcal{I}}
\def\st{\ \ \mathrm{s.t.} \ \ }
\def\c{\mathbf{c}}
\def\g{\mathbf{g}}
\def\x{\mathbf{x}}
\def\z{\mathbf{z}}
\def\v{\mathbf{v}}
\def\cs{\mathbf{c}_{s}}
\def\cah{\hat{\mathbf{c}}^{\ast}_{a}}
\def\csh{\hat{\mathbf{c}}^{\ast}_{s}}
\def\cshx{\hat{\mathbf{c}}^{\ast,\tilde{x}}_{s}}
\def\cahx{\hat{\mathbf{c}}^{\ast,\tilde{x}}_{a}}
\def\cshxx{\hat{\mathbf{c}}^{\ast,\mathbf{x}'}_{s}}
\def\cahxx{\hat{\mathbf{c}}^{\ast,\mathbf{x}'}_{a}}
\def\cstxx{\tilde{\mathbf{c}}^{\ast,\mathbf{x}'}_{s}}
\def\catxx{\tilde{\mathbf{c}}^{\ast,\mathbf{x}'}_{a}}
\def\cstx{\tilde{\mathbf{c}}^{\ast,\tilde{x}}_{s}}
\def\catx{\tilde{\mathbf{c}}^{\ast,\tilde{x}}_{a}}
\def\caht{\hat{\mathbf{c}}_{a}}
\def\csht{\hat{\mathbf{c}}_{s}}
\def\cat{\tilde{\mathbf{c}}^{\ast}_{a}}
\def\cst{\tilde{\mathbf{c}}^{\ast}_{s}}
\def\cX{\mathcal{X}}
\def\o{\mathbf{o}}
\def\q{\mathbf{q}}
\def\cY{\mathcal{Y}}
\def\P{\mathbf{P}}
\def\b{\mathbf{b}}
\def\D{\mathbf{D}}
\def\W{\mathbf{W}}
\def\Da{\mathbf{D}_{a}}
\def\Ds{\mathbf{D}_{s}}
\def\Dcal{\mathcal{D}}
\def\ca{\mathbf{c}_{a}}
\def\Rb{\mathbb{R}}
\def\deltab{\boldsymbol{\delta}}
\def\Sb{\mathbb{S}}
\DeclareMathOperator*{\argmin}{arg\,min}
\DeclareMathOperator{\argmax}{arg\, max}
\newcommand{\norm}[2]{\left\| #1 \right\|_{#2}}
\newcommand{\myparagraph}[1]{\smallskip\noindent\textbf{#1.}}
\newcommand{\printfnsymbol}[1]{%
  \textsuperscript{\@fnsymbol{#1}}%
}
\theoremstyle{plain}
\newtheorem{theorem}{Theorem}[section]
\newtheorem{proposition}[theorem]{Proposition}
\newtheorem{lemma}[theorem]{Lemma}
\theoremstyle{definition}
\newtheorem{definition}[theorem]{Definition}
\theoremstyle{remark}
\newtheorem{remark}[theorem]{Remark}
\title{Reverse Engineering $\ell_p$ attacks: A block-sparse optimization approach with recovery guarantees}
\author[1]{Darshan Thaker \thanks{Equal Contribution.}}
\author[1]{Paris Giampouras \printfnsymbol{1}}
\author[1]{Ren\'e Vidal}
\affil[1]{Mathematical Institute of Data Science, Johns Hopkins University Baltimore, MD USA \authorcr
  \{\tt dbthaker, parisg, rvidal\}@jhu.edu}
\begin{document}

\date{\vspace{-6ex}}
\maketitle









\vskip 0.3in




\begin{abstract}
Deep neural network-based classifiers have been shown to be vulnerable to imperceptible perturbations to their input, such as $\ell_p$-bounded norm adversarial attacks. This has motivated the development of many defense methods, which are then broken by new attacks, and so on. This paper focuses on a different but related problem of reverse engineering adversarial attacks. Specifically, given an attacked signal, we study conditions under which one can determine the type of attack ($\ell_1$, $\ell_2$ or $\ell_\infty$) and recover the clean signal. We pose this problem as a block-sparse recovery problem, where both the signal and the attack are assumed to lie in a union of subspaces that includes one subspace per class and one subspace per attack type. We derive geometric conditions on the subspaces under which any attacked signal can be decomposed as the sum of a clean signal plus an attack. In addition, by determining the subspaces that contain the signal and the attack, we can also classify the signal and determine the attack type. Experiments on digit and face classification demonstrate the effectiveness of the proposed approach.
\end{abstract}

\section{Introduction}

Deep neural network based classifiers have been shown to be vulnerable to imperceptible perturbations to their inputs, which can cause the classifier to predict an erroneous output \cite{biggio2013evasion,szegedy_intriguing_2014}. Examples of such adversarial attacks include the Fast Gradient Sign Method (FGSM) \cite{Goodfellow:ICLR15} and the Projected Gradient Method (PGD) \cite{Madry:ICLR18}, which consist of small additive perturbations to the input that are bounded in $\ell_p$ norm and designed to maximize the loss of the classifier. In response to these attacks, many defense methods have been developed, including Randomized Smoothing \cite{Cohen:ICML19}. However, such defenses have been broken by new attacks, and so on, leading to a cat and mouse game between new attacks \cite{Athlaye:ICML18,Athalye2018ObfuscatedGG, Carlini:2017:AEE:3128572.3140444, Uesato2018AdversarialRA, Athalye2018OnTR} and new defenses \cite{madry2018towards, samangouei2018defensegan, Zhang2019TheoreticallyPT, 7546524, Kurakin2016AdversarialML, Miyato2017VirtualAT, Zheng2016ImprovingTR}. 

This paper focuses on the less well studied problem of \emph{reverse engineering adversarial attacks}. Specifically, given a corrupted signal $\x' = \x + \deltab$, where $\x$ is a ``clean'' signal and $\deltab$ is an $\ell_p$-norm bounded attack, the goal is to determine the attack type ($\ell_1$, $\ell_2$ or $\ell_\infty$) as well as the original signal~$\x$.

\myparagraph{Challenges}
In principle, this problem might seem impossible to solve since there could be many pairs $(\x,\deltab)$ that yield the same $\x'$. A key challenge is hence to derive conditions under which this problem is well posed. We propose to address this challenge by leveraging results from the sparse recovery literature, which show that one can perfectly recover a signal $\x$ from a corrupted version $\x' = \x + \deltab_0$ when both $\x$ and $\deltab_0$ are sparse in a meaningful basis. Specifically, it is shown in \cite{Wright:TIT10} that if $\x$ is sparse with respect to some signal dictionary $\Ds$, i.e., if $\x = \Ds\cs$ for a sparse vector $\cs$, and $\deltab_0$ is also sufficiently sparse, then the solution $(\c^*,\deltab^*)$ to the convex problem
\begin{equation}
\label{eq:cross_bouquet}
\min_{\c} \|\c\|_1 + \|\deltab \|_1 \quad \st \quad \x' = \Ds\c + \deltab 
\end{equation}
is such that $\c^* = \cs$ and $\deltab^* = \deltab_0$. In other words, one can perfectly recover the clean signal as $\x = \Ds\c^* = \Ds\cs$ and the corruption $\deltab_0$ by solving the convex problem in \eqref{eq:cross_bouquet}.



Unfortunately, these classical results from sparse recovery are not directly applicable to the problem of reverse engineering adversarial attacks due to several challenges:
\begin{enumerate}
\item An attack $\deltab$ may not be sparse. Indeed, $\deltab$ is usually assumed to be bounded in $\ell_p$ norm, where $p=1,2,\infty$. While results from sparse recovery can be extended to bounded $\ell_2$ errors, e.g. \cite{Candes-Romberg-Tao:CPAM06} considers the case where  $\deltab$ is $\ell_2$-bounded, such results only guarantee stable recovery, instead of exact recovery, of {\it sparse} vectors $\c_s$ close to $\c^*$. 
\item One of our goals is to determine the attack type. To do so, we need to exploit the fact that $\deltab$ is not an arbitrary vector, but rather a function of the attack type, the loss, the neural network and $\x$  (e.g., in the PGD method $\deltab$ is the projection of the gradient of the loss with respect to $\x$ onto the $\ell_2$ ball). The challenge is to devise an attack model that, despite these complex dependencies, is amenable to results from sparse recovery. In particular, we wish to impose structure on $\deltab$ that correlates its sparsity pattern to the attack type. 
\item Another goal is to correctly classify $\x'$ despite the attack $\deltab$. This is at odds with most sparse recovery results, which focus on reconstruction rather than classification. The main exceptions are the sparse and block-sparse representation classifiers \cite{Wright:PAMI09,Elhamifar:TSP12}, which divide the dictionary $\Ds$ into blocks corresponding to different classes and exploit the sparsity pattern of $\cs$ to determine the class of $\x$. But such classifiers are different from the neural network classifier given to us. 
\end{enumerate}

\myparagraph{Paper contributions}
This paper proposes a framework based on structured block-sparsity for addressing some of these challenges. Our key contributions are the following.

First, we develop a structured block-sparse model, a condition we show holds in a variety of settings, for decomposing attacked signals under three main assumptions about the signal and underlying network: 
\begin{enumerate}
\item We assume that the signal $\x'$ to be classified is the sum of a clean signal  $\x$ plus an $\ell_p$-norm bounded adversarial attack $\deltab$, i.e., $\x' = \x + \deltab$, i.e., additive attacks.

\item We assume that the clean signal $\x$ is \emph{block sparse} with respect to a dictionary of signals $\Ds$, i.e. $\x = \Ds \cs$, where $\Ds$ can be decomposed into multiple blocks, each one corresponding to one class,  and $\cs$ is {\it block-sparse} i.e. $\cs$ is only supported on a sparse number of blocks, but not necessarily sparse within those blocks.

\item We assume that the the $\ell_p$-norm bounded adversarial attack also admits a block-sparse representation in the columnspace of a dictionary $\Da$, which contains blocks corresponding to different $\ell_p$ bounded attacks.
\end{enumerate}

Second, we study conditions under which the aforementioned assumptions are feasible. In particular, we prove that $\ell_p$ attacks can be expressed as a structured block-sparse combination of other attacks for general loss functions when the attacked deep classifier satisfies some local linearity assumptions (e.g. ReLU networks).

Third, to determine the attack type and reconstruct the clean signal, we solve a convex optimization problem of the form:
		\begin{equation} \label{eq:generic_opt_form}
			\min_{\cs,\ca} ~~~ \|\cs\|_{1,2} + \|\ca\|_{1,2} ~~~ \st ~~~ \x' = \Ds\cs+ \Da\ca.
		\end{equation}
		Here, $\|\cdot\|_{1,2}$ is the $\ell_1/\ell_2$ norm that promotes structured block-sparsity on $\cs$ and $\ca$ exploiting the structure of $\Ds$ and $\Da$. For this optimization problem, we derive geometric data-dependent conditions under which the attack type and the clean signal can be recovered. These conditions rely on a special covering radius of $\Ds$ and $\Da$ and a generalization of angular distance induced by the $\ell_1 / \ell_2$ norm. 

Fourth, since solving \eqref{eq:generic_opt_form} can be computationally expensive due to the potentially large size of dictionaries $\Ds$ and $\Da$, we develop an efficient active set homotopy algorithm by first relaxing the constrained problem to a regularized problem instead and solving a sequence of subproblems restricted to certain blocks of $\Ds$ and $\Da$. 

Finally, we perform experiments on digit and face classification datasets to complement our theoretical results and demonstrate not only the robustness of block-sparse models on attacks arising from a union of $\ell_p$ perturbations, but also the effectiveness of our models in classifying the attack family.

\section{Related Work}

\myparagraph{Structured representations for data classification} 
Sparse representation of signals has achieved great success in applications such as image classification \cite{Yang:CVPR09,Mairal:CVPR08}, action recognition \cite{Yang:JAISE09,Castrodad:IJCV12}, and speech recognition \cite{Gemmeke:TASLP11,Sainath:TASLP11} (see \cite{Wright:IEEEProc10,Mairal:FT12} for more examples). These works rely on the assumption that data from a specific class lie in a low-dimensional subspace spanned by training samples of the same class. Hence, correct classification of amounts to  recovering the correct sparse representation of the signal on the columnspace of a certain dictionary. However, these works do not account for adversarially corrupted inputs, which pose significant challenges and are studied in this work.

\myparagraph{Structured representations for adversarial defenses} In the adversarial learning community, denoising-based defense strategies that leverage structured data representations have been recently proposed e.g. the work of \cite{samangouei2018defensegan} and \cite{moosavi2018divide} (see \cite{niu2020limitations} for a comprehensive survey). However, these approaches do not perform attack classification, and the key advantage of our approach is joint recovery of the signal and attack. To the best of our knowledge, our work is the first one to study this problem from a theoretical perspective. Additionally, even though our main goal is not to develop simply a stronger defense, we can compare the signal classification stage of our approach to defenses for a union of perturbation families simultaneously. Work such as \cite{tramer2019adversarial, maini2020adversarial, croce2019provable} develop adversarial training variants to tackle this problem. Our approach is distinct from adversarial training in that it requires no retraining of the neural network and can be applied post-hoc to adversarial examples.

\myparagraph{Detection of Adversarial Attacks} There is a vast literature on the detection of adversarial attacks, which work on the problem of detecting whether any example is an adversarial example or a clean example. These methods can be categorized into unsupervised and supervised methods, e.g. \cite{metzen2017detecting} and \cite{grosse2017statistical}. We refer the reader to \cite{bulusu2020anomalous} for a comprehensive survey on these methods. Our task is fundamentally different in that given an attacked image, we aim to classify the type of attack used to corrupt the image, and moreover provide theoretical guarantees under which this recovery is feasible. For this problem, \cite{maini2020perturbation} provide a method to classify attack perturbations similar to our work; however, they classify between $\ell_1, \ell_2$ attacks vs. $\ell_\infty$ attacks only. 

\section{Block-sparse model of $\ell_p$ attacked signals}
\label{sec:block-sparse-model}

Assume we are given a deep classifier $f_\theta : \cX \to \cY$, where $\cX$ is the input space, $\cY$ is the output space and $\theta$ are the classifier weights. Assume the classifier is trained using a loss function $L : \mathcal{Y}\times \mathcal{Y} \to \mathbb{R}^+$. Assume also an additive attack model $\x' = \x + \deltab$, where the attack $\deltab$ is a small perturbation to the input $\x$ that causes the classifier to make a wrong prediction, i.e., $f_\theta(\x')\neq f_\theta(\x)$.

We restrict our attention to $\ell_p$-norm bounded attacks, i.e. $\deltab\in\Delta_p=\{\deltab\in\Rb^n: \|\deltab\|_p\leq \epsilon \}$, which are crafted by finding a perturbation to $\x$ that maximizes the loss, i.e.:
\begin{equation} 
\max_{\deltab\in\Delta_p} L(f_\theta(\x+\deltab),y).
\end{equation}
Since solving this problem can be costly, a common practice is to maximize a first-order approximation of the loss. 
%
%
Letting $\g = \nabla_\x L(f_\theta(\x),y)$, we obtain the following gradient-based attacks for $p=1, 2, \infty$, respectively:
\begin{equation} 
\begin{split}
\deltab_1 = \epsilon ~ \mathbf{a},~~
\deltab_2 = \epsilon  ~ \frac{\g}{\| \g\|_2},~~
\deltab_\infty = \epsilon ~ \text{sign} (\g),
\end{split}
\end{equation}
where $\mathbf{a}$ denotes a unit norm vector where $\mathbf{a}_{i^\star} = \text{sign} ( g_{i^\star}) ~ \text{for} ~ i^\star := \argmax_i |g_i|$.
Note that $\ell_p$ attacks depend on the gradient of the loss with respect to the classifier input, the classifier output, the value of $p\geq 1$ used in the $\ell_p$-norm,
and the attack strength $\epsilon > 0$. 

\subsection{Validity of the block-sparse signal model}
We assume that the clean signal $\x$ (or features extracted from it) can be expressed in terms of a dictionary of signals $\Ds$ with coefficients $\cs$, i.e. $\x = \Ds \cs$. 
We also assume that $\Ds$ can be decomposed into $r$ blocks, one per class, and that its columns are unit norm.
Letting $\Ds[i]\in \Rb^{n\times m_i}$ denote the dictionary for the $i$th class and $\c_s[i]\in \Rb^{m_i}$ the corresponding set of coefficients, we can write the clean signal as
\begin{equation}
\x = \sum_{i=1}^r \Ds[i] \cs[i].
\end{equation} 

A priori this might seem like a strong assumption, which is violated by many datasets. However, we argue that the validity of this model depends on the choice of the dictionary (fixed or learned), the choice of additional structure on the coefficients (e.g, sparse, block-sparse), and the choice of a data embedding (e.g., fixed features such as SIFT, or unsupervised learned deep features). 

For example, as is common in image denoising, the dictionary $\Ds$ could be chosen as a Fourier or wavelet basis and the coefficients sparse with respect to such basis. Alternatively, as is common in face classification \cite{Belhumeur:IJCV98,Basri:PAMI03,Ho:CVPR03} where each class can be described by a low-dimensional subspace, $\Ds$ could be chosen as the training set and different blocks of the dictionary could correspond to different classes (subspaces). 
These results will motivate us to report experiments on face classification datasets, such as YaleB \cite{lee2005acquiring}, for which our modeling assumptions are satisfied. 

Even when the data from one class cannot be well approximated by a linear subspace, we note that the model $\x=\Ds\cs$ is actually nonlinear with (structured) sparsity constraints on $\cs$. Indeed, in manifold learning, data is often approximated locally by a subspace of nearest neighbors \cite{Roweis:Science00,Elhamifar:NIPS11}. When $\cs$ is block sparse, the model $\Ds\cs$ thus generates data on a manifold by stitching locally linear approximations. This will motivate our experiments on the MNIST dataset, where the set of all images of one digit is not a linear subspace, but our model still performs well.

\subsection{Structured block-sparse attack model}
We assume that the attack $\deltab\in\Delta_p$ can be expressed in terms of an attack dictionary $\Da$ with coefficients $\ca$, i.e., $\deltab = \Da \ca$. We also assume that the columns of $\Da$ are unit norm and that the dictionary can be decomposed into $a$ blocks, one per attack type. Moreover, we assume that the columns of $\Da$ are chosen as $\ell_p$ attacks evaluated at points in the training set. Therefore, each block of $\Da$ can be further subdivided into $r$ subblocks, one per class. As a consequence, the dictionary $\Da$ is composed of $r\times a$ blocks, $\Da[i][j]\in \Rb^{n\times k_{ij}}$, each one corresponding to data points from the $i$th class and $j$th attack type. Decomposing $\ca$ according to the block structure of $\Da$ so that $\ca[i][j]\in \Rb^{k_{ij}}$ is the vector of coefficients corresponding to block $\Da[i][j]$, we obtain the following threat model:
\begin{equation}
\label{eq:block-sparse-attack}
\deltab = \sum_{i=1}^r \sum_{j=1}^a \Da[i][j]\ca[i][j].
\end{equation}
Observe that when $\deltab$ is an $\ell_p$ attack evaluated at one of the points in the training set, the vector of coefficients $\ca$ is 1-sparse. In general, however, $\deltab$ will be evaluated at a test data point. In the next section, we will show in this case, we still expect $\deltab$ to be 1-block sparse for attacks on neural networks with ReLU activations. That is, we expect an attack of a certain type evaluated at a test point from one of the classes to be well approximated as a sparse linear combination of attacks of the same type but evaluated at other training data points from the same class.
%
\subsection{Validity of the attack model for ReLU networks}
Consider a ReLU network $f_\theta \colon \cX \to \Rb^r$, mapping the input to a point in $\Rb^r$, where $r$ is the number of classes.
The network is composed of $k$ layers, each consisting of an affine transformation followed by a ReLU non-linearity, i.e. 
\begin{equation}
f_\theta(\x) = \W_k (\ldots (\W_2 (\W_1 \x + \b_1)_+ + \b_2)_+ \ldots )_+ + \b_k,
\end{equation}
where $\theta=(\W_k, \ldots, \W_2, \W_1)$ denotes the parameters and $(\cdot)_+$ is the pointwise ReLU operation. The classification decision is then an argmax operation given by $\argmax_{i = 1 \ldots r} |z^{\x,k}_i|$, where $\z_\x^k = f_\theta(\x)$ is the network output. 

ReLU networks partition the input space into several polyhedral regions, inside each of which the network behaves like an affine map \cite{balestriero2020mad}.
Specifically, the affine region around $\x$ is given by the set of all points $\x'$ that produce the same \emph{sign pattern} as $\x$ after the ReLU activations at all the intermediate layers. 
More formally, defining $\zetab^l(\x) = {\rm sgn}(\W_l (\ldots (\W_2 (\W_1 \x + \b_1)_+ + \b_2)_+ \ldots ) + b_l)$ to be the sign pattern at layer $l$ for the input $\x$, the neural network $f_\theta$ behaves as an affine function
$f_\theta(\x) = \P_{S}^\top\x + \q$ in the region $S = \{ \x' \colon (\zetab^1(\x'), \zetab^2(\x'), \ldots, \zetab^k(\x')) = (\zetab^1(\x), \zetab^2(\x), \ldots, \zetab^k(\x))\}$. Therefore, the gradient of a loss $L(f_\theta(\x),y)$ in for all $\x'\in S$ 
is equal to
%
\begin{equation}
\begin{split}
\nabla_\x L(f_\theta(\x'), y) & = \frac{\partial f_\theta(\x')}{\partial \x'} \nabla_{\z_\x^k} L(\z_\x^k,y)  \\
 & = \P_S \nabla_{\z_x^k} L(\z_\x^k,y)
\end{split}
\label{eq:val_at}
\end{equation}%
where the last part of \eqref{eq:val_at} comes by the affine approximation of the output of the ReLU network i.e., $f_\theta(\x) = \P_{S}^\top\x + \q$  
Thus, the gradient of the loss function of a ReLU network at a test point in $S$ lives in the columnspace of a matrix $\P_S$. That being said, the gradient of the loss at a test point can be expressed as a linear combination of the gradients at training samples in the same region $S$. Note that this property holds true for popular loss functions e.g. cross-entropy loss, mean squared loss, etc. Moreover, we further assume that training samples in $S$ belong to the same class, which is a reasonable assumption to make for ReLU networks \cite{Sattelberg:2020}.

Hence, for any test point lying in region $S$, if there exists a training datapoint of the same class and also in $S$, then there will exist a vector $\ca$ that is a feasible solution of \eqref{eq:generic_opt_form} and block-sparse in the columnspace of $\Da$ with only one non-zero block (assuming that the $\deltab$ comes from a single attack from the family). 
Recent works \cite{lee2019towards} show that ReLU networks can be trained to have large linear regions, hence it is reasonable to expect that $S$ contains a training point.
\section{Reverse engineering of $\ell_p$-bounded attacks}
%
In Section \ref{sec:block-sparse-model} we introduced a block-sparse model of attacked signals, $\x' = \Ds\cs+ \Da\ca$, where $\Ds$ is a dictionary of clean signals (typically the training set), $\Da$ is a dictionary of attacks (typically $\ell_p$ attacks on the training set), and $\cs$ and $\ca$ are block-sparse vectors whose nonzero coefficients indicate the class and the attack type. In this section, we show how to reverse engineer the attack and clean signal.
\subsection{Block sparse optimization approach}
Assume that test sample $\x'$ has been corrupted by an attack of a single type. Since $\x'$ belongs to only one of the classes, we expect vectors $\cs$ and $\ca$ to be 1-block-sparse. Therefore, the problem of reverse engineering $\ell_p$ attacks can be cast as a standard block-sparse optimization problem, where we minimize the total number of nonzero blocks in $\cs$ and $\ca$ needed to generate $\x'$, i.e.
\begin{equation}
\begin{split}
    \min_{\cs,\ca} & \quad \sum^r_{i=1}\left(I(\|\cs[i]\|_2) + \sum^a_{j=1}I(\|\ca[i][j]\|_2) \right)   \\
    \st & \quad  \x'=\Ds\cs + \Da\ca ,
  \end{split}
  \label{eq:stand_blk_opt}
\end{equation}
where $I(\cdot)$ is the indicator function, i.e., $I(x)=1$ if $x\neq0$ and $I(x)=0$ if $x=0$. As is common in block-sparse recovery \cite{Elhamifar:TSP12}, a convex relaxation of the problem of minimizing the number of nonzero blocks is given by:
\begin{equation}
\begin{split}
    \min_{\cs,\ca} & \quad \sum^r_{i=1} \|\cs[i]\|_2 + \sum^a_{j=1} \|\ca[i][j]\|_2   \\
    \st & \quad  \x'=\Ds\cs + \Da\ca ,
  \end{split}
  \label{eq:stand_blk_opt_conv_relax}
\end{equation}
where the sum of the $\ell_2$ norms of the blocks, also known as the $\ell_1/\ell_2$ norm, is a convex surrogate for the number of nonzero blocks. 

\subsection{Active Set Homotopy Algorithm}
In practice, we further relax the problem and solve the regularized noisy version of problem
\eqref{eq:stand_blk_opt_conv_relax}, which can be written in the form,
\begin{equation}
\begin{split}
&\min_{\cs,\ca}\frac{1}{2}\|\x' - \Ds\cs - \Da\ca\|^2_2  \\
&+ \lambda_s \sum^r_{i=1} \|\cs[i]\|_2 + \lambda_a \sum^a_{j=1} \|\ca[i][j]\|_2 .
\end{split}
\label{eq:reg_objective}
\end{equation}

Because the size of the dictionaries $\Ds$ and $\Da$ can be large, it is crucial to develop scalable algorithms to solve the above optimization problem. Furthermore, the choice of $\lambda_s$ and $\lambda_a$ play a crucial role in enforcing the correct level of block-sparsity. High values of these parameters will drive the solutions $\cs, \ca$ to the zero vector, and too low values will result in a solution that is not block-sparse as desired. To address both issues, we develop an active-set based homotopy algorithm. The main insight is that instead of solving an optimization problem using the full data matrix, we can restrict $\Ds$ and $\Da$ to the blocks that correspond to non-zero blocks of the optimal $\cs$ and $\ca$. Using the optimality conditions of problem \eqref{eq:reg_objective}, we can derive an algorithm that maintains a list of block indices for both $\cs$ and $\ca$, denoted as the \textit{active sets} $T_s$ and $T_a$, and solve reduced subproblems based on these indices, significantly reducing runtime. The details of the derivation can be found in the Appendix.  

Additionally, to pick proper values of $\lambda_s$ and $\lambda_a$, we employ techniques from homotopy methods in sparse optimization to construct a sequence of decreasing values for $\lambda_s$ and $\lambda_a$ \cite{Malioutov:ICASSP05}. Traditionally, homotopy methods for $\ell_1$ minimization use the fact that the solution path as a function of regularization strength is piecewise linear, with breakpoints when the support of the solution changes. However, with the block-sparsity constraint, the path is nonlinear \cite{Yau:Stats2017}, and thus we approximate this path using a sequence of $\lambda$ values. The initial value of $\lambda_s$ and $\lambda_a$ is chosen to be a hyperparameter $\gamma \in (0, 1)$ times the value that produces the all-zeros vector based on the optimality conditions. In Algorithm~\ref{alg:activeset_homotopy} in the Appendix, we provide the details of the active set homotopy algorithm.
\section{Theoretical analysis of the block-sparse minimization problem}
\label{sec:theory}
In this section, we provide geometrically interpretable conditions under which the true signal class and attack type, which is generated by a single $\ell_p$ perturbation type, can be recovered from the nonzero blocks of $\cs,\ca$ using the proposed block-sparse minimization approach given in \eqref{eq:stand_blk_opt_conv_relax}. 

At first sight, one may think that existing conditions for block-sparse recovery in a union of low-dimensional subspaces \cite{Elhamifar:TSP12}, which require the subspaces to be disjoint and sufficiently separated, might be directly applicable to our problem. However, the adversarial setting presents several additional challenges. 
First, we do not need conditions for all block pairs, but only for the block pair formed by one signal subspace and one signal-attack subspace.
Second, the two non-zero blocks are not independent from each other, because if we determine the signal-attack block $(i^*,j^*)$, then we also determine the signal block $i^*$. 
Third, we need to disentangle not only one signal class from another, but also one attack type from another, and signals from attacks. 
%
%

In the following, we address these three challenges. Specifically, we significantly improve on the block-sparse recovery results of \cite{Elhamifar:TSP12} by getting rid of the strong assumption of disjointness among {\it all} pairs of subspaces spanned by the blocks of the dictionaries. Moreover, our analysis goes one step beyond previous efforts (e.g. \cite{Wang:PAMI2017}) to generalize the subspace-sparse recovery results \cite{You:ICML15,You:arxiv15-SSR} in the following ways. First, we focus on block-sparse recovery in a union of dictionaries as opposed to \cite{Wang:PAMI2017}, which focuses on a single dictionary. Second, our problem has more specific structure i.e., the dependency of non-zero blocks of the signal (see Remark 5.2). Third, our conditions are based on different newly introduced geometric measures i.e. covering radius and angular distances induced  by the $\ell_1/\ell_2$ norm. This leads to our first main result (Theorem~\ref{the:prc_cond}). Finally, we provide an additional result (see Theorem~\ref{the:drc_cond}), which relaxes Theorem~\ref{the:prc_cond} by involving the angular distances between points in a set of Lebesgue measure zero (instead of all points in the direct sum of the signal and attack subspaces) and a finite set.
Proposition \ref{propos} gives a necessary and sufficient condition 
for recovering the correct signal and attack by solving problem  \eqref{eq:stand_blk_opt_conv_relax}. Let $\Ical=\{1,2,\dots,r\}$ and $\Jcal = \{1,2,\dots,a\}$ denote the indices for the blocks of $\Ds$ and $\Da$ and vectors $\cs$ and $\ca$ respectively. We define the {\it correct-class minimum $\ell_{1}/\ell_{2}$ vectors} $\csh,\cah$ with non-zero blocks $\csh[i^\ast]$ and $\cah[i^{\ast}][j^{\ast}]$ as
\begin{equation}
\begin{split}
    \{\csh,\cah\} \equiv \argmin_{\cs,\ca}{\|\cs[i^\ast]\|_2+\|\ca[i^\ast][j^\ast]\|_2} \\ 
    \st \x' = \Ds[i^\ast]\cs[i^\ast] + \Da[i^\ast][j^\ast]\ca[i^\ast][j^\ast],
    \end{split}
    \label{eq:cor_minim_vec}
\end{equation}	
and the {\it wrong-class minimum $\ell_{1}/\ell_{2}$ norm vectors} $\cst,\cat$ as,
\begin{equation}
\begin{split}
  &  \{\cst,\cat \} \equiv \\
  &    \argmin_{\cs,\ca}\sum_{i\in \Ical \setminus \{i^{\ast}\} } \|\cs[i]\|_2  
    +\sum_{i\in \Ical , j\in \Jcal\setminus \{j^{\ast}\} }  \|\ca[i][j]\|_2   \\
  &  \st \\
  &  \x' = \sum_{i \in \Ical \setminus \{i^{\ast}\}}~\Ds[i]\cs[i]+\sum_{i\in \Ical, j\in \Jcal\setminus  \{j^{\ast}\} }~\Da[i][j]\ca[i][j] 
        \end{split}
\label{eq:wrong_minim_vec}
\end{equation}
Note that the non-zero blocks of $\cst$ and $\cat$ do not correspond to the correct signal and attack.
\begin{proposition}
The correct classes of the signal $\x \in \mathcal{S}^{\x}_{i^\ast}$ and the attack $\deltab\in \mathcal{S}^{\deltab}_{i^\ast, j^\ast}$, with $\mathcal{S}^\x_{i^\ast} \cap \mathcal{S}^{\deltab}_{i^\ast j^\ast} = \emptyset$,  can be recovered by solving \eqref{eq:stand_blk_opt_conv_relax} if and only if, $\forall i^\ast,j^\ast, \forall \x'\in (\mathcal{S}^\x_{i^\ast} \oplus \mathcal{S}^{\deltab}_{i^\ast, j^\ast}), \x\neq \mathbf{0}$,  the $\ell_{1}/\ell_{2}$ norm of the {\it correct-class  minimum $\ell_{1}/\ell_{2}$ vectors} $\csh,\cah$ is strictly less that of the {\it wrong-class minimum $\ell_{1}/\ell_{2}$ norm vectors} $\cst,\cat$, i.e.,
\begin{equation}
\begin{split}
\|\csh\|_{1,2} + \|\cah\|_{1,2} < \|\cst\|_{1,2} + \|\cat\|_{1,2}.
\end{split}
\label{prop:nec_suf_rec}
\end{equation}
%
\label{propos}
\end{proposition}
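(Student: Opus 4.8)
The plan is to prove both implications by playing the two restricted problems \eqref{eq:cor_minim_vec} and \eqref{eq:wrong_minim_vec} against the full problem \eqref{eq:stand_blk_opt_conv_relax}. The starting observation is that both the correct-class pair $(\csh,\cah)$ and the wrong-class pair $(\cst,\cat)$ are feasible for \eqref{eq:stand_blk_opt_conv_relax}, since each reconstructs $\x'$; hence the optimal value of \eqref{eq:stand_blk_opt_conv_relax} is bounded above by $\min\{\|\csh\|_{1,2}+\|\cah\|_{1,2},\ \|\cst\|_{1,2}+\|\cat\|_{1,2}\}$. Throughout I will use that the disjointness $\mathcal{S}^\x_{i^\ast}\cap\mathcal{S}^{\deltab}_{i^\ast,j^\ast}=\emptyset$ makes the decomposition $\x'=\x+\deltab$ into its signal and attack components unique, and that $\|\cdot\|_{1,2}$ is additive across blocks.

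For necessity I would argue by contraposition. Suppose $\|\csh\|_{1,2}+\|\cah\|_{1,2}\ge\|\cst\|_{1,2}+\|\cat\|_{1,2}$ for some admissible $\x'$. Then $(\cst,\cat)$, whose support avoids the correct signal block $i^\ast$ and the correct attack type $j^\ast$, is a feasible point of \eqref{eq:stand_blk_opt_conv_relax} whose objective does not exceed that of the best correct-supported solution. Consequently no minimizer of \eqref{eq:stand_blk_opt_conv_relax} can be the unique correct-supported one: either $(\cst,\cat)$ is strictly cheaper, so recovery fails outright, or it ties, so the nonzero blocks, and therefore the inferred class and attack, are ambiguous. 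In both cases recovery fails, which is the contrapositive of the claim.

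For sufficiency, assume the strict inequality holds at every $\x'\in\mathcal{S}^\x_{i^\ast}\oplus\mathcal{S}^{\deltab}_{i^\ast,j^\ast}$ with $\x\neq\mathbf{0}$, and let $(\cs^{\circ},\ca^{\circ})$ be any optimal solution of \eqref{eq:stand_blk_opt_conv_relax}. I would split the reconstruction as $\x'=\x'_{c}+\x'_{w}$, where $\x'_{c}=\Ds[i^\ast]\cs^{\circ}[i^\ast]+\Da[i^\ast][j^\ast]\ca^{\circ}[i^\ast][j^\ast]$ is the part carried by the correct blocks and $\x'_{w}$ is the part carried by the remaining blocks. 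Since $\mathcal{S}^\x_{i^\ast}\oplus\mathcal{S}^{\deltab}_{i^\ast,j^\ast}$ is a subspace containing both $\x'$ and $\x'_{c}$, the residual $\x'_{w}$ lies in it as well, so the hypothesis applies to $\x'_{w}$. By additivity of $\|\cdot\|_{1,2}$ the objective of $(\cs^{\circ},\ca^{\circ})$ equals $A+B$, where $A$ is the cost of the two correct blocks and $B$ the cost of the wrong ones; moreover the wrong blocks form a feasible point of \eqref{eq:wrong_minim_vec} for $\x'_{w}$, so $B$ is at least the wrong-class minimum cost of $\x'_{w}$. The hypothesis then gives that the correct-class minimum cost of $\x'_{w}$ is strictly smaller than $B$, so adjoining the correct-block reconstruction of $\x'_{w}$ to the correct blocks already present produces a feasible solution supported only on the correct blocks whose objective is strictly below $A+B$. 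This contradicts optimality unless $B=0$; hence every optimal solution is correct-supported, and the signal class $i^\ast$ and attack type $(i^\ast,j^\ast)$ are recovered.

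The universal quantifier over $\x'$ is what makes this work: the contradiction is obtained by invoking the hypothesis not at $\x'$ but at the residual $\x'_{w}$ manufactured from the hypothetical wrong blocks. I expect the main obstacle to lie in the bookkeeping of this decomposition, specifically (i) guaranteeing that $\x'_{w}$ has a nonzero signal component, so that the hypothesis (which excludes $\x=\mathbf{0}$) is applicable, and treating separately the degenerate case in which $\x'_{w}$ is a pure attack; and (ii) checking that every block carrying $\x'_{w}$ indeed belongs to the admissible index set of \eqref{eq:wrong_minim_vec} (signal blocks $i\neq i^\ast$ and attack blocks with $j\neq j^\ast$), so that $B$ truly lower-bounds the wrong-class minimum, with any blocks of the form $(i,j^\ast)$, $i\neq i^\ast$, folded into the split with extra care. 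Once these points are settled, the triangle inequality for $\|\cdot\|_{1,2}$ and the block additivity of the objective close the argument.
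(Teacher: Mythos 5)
Your proposal is correct and follows essentially the same route as the paper's own proof: the paper's sufficiency argument likewise forms the residual $\tilde{\x}$ carried by the wrong blocks of a hypothetical optimizer, invokes the hypothesis at that residual, and uses the triangle inequality to assemble a strictly cheaper correct-supported solution, while its necessity argument is the same feasibility/tie reasoning you give by contraposition. The bookkeeping caveats you flag (pure-attack residuals with zero signal component, and blocks of the form $(i,j^\ast)$ with $i\neq i^\ast$) are likewise left implicit in the paper's proof.
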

\begin{remark}
Note that disjointness between $\mathcal{S}^{\x}_{i^\ast}$ and $\mathcal{S}^{\deltab}_{i^\ast j^\ast}$ is necessary if we want to recover the class of the signal {\it and} the attack. However, in case that disjointness is violated, we can still guarantee the recovery of the correct class of the signal since we know that attacks depend on the signal class.
\end{remark}
Next, we aim to provide more geometrically interpretable conditions for the recovery of the signal and attack classes. First, we generalize the standard angular distance originally used for $\ell_{1}$ norm minimization problems, \cite{You:ICML15}, to the particular case of $\ell_{1}/\ell_{2}$ norm minimization.
\begin{definition}
Let $\Dcal$ be a set of unit $\ell_{2}$ norm columns of the dictionary $\D = [\D[1],\D[2],\dots,\D[c]]$ with $\D[i]\in\Rb^{n\times m}$. The angular distance between the atoms in $\Dcal$ and a vector $\v\in\Rb^{n}$ is defined as,
\begin{equation}
\theta_{1,2}( \v,\pm \Dcal) = \cos^{-1}\left(\frac{1}{\sqrt{m}}\|\D^\top\frac{\v}{\|\v\|_{2}}\|_{\infty,2}\right).
\label{def:ang_dist}
\end{equation}
\end{definition}
This definition can also be extended to a set of vectors $\Vcal$ as 
\begin{equation}
\theta_{1,2}( \Vcal,\pm\Dcal) = \inf_{\v\in \Vcal} \cos^{-1}\left(\frac{1}{\sqrt{m}}\|\D^\top\frac{\v}{\|\v\|_{2}}\|_{\infty,2}\right).
\label{def:ang_dist_gen}
\end{equation}
Similarly, we define a generalized version of the covering radius of a set induced by the $\ell_{1}/\ell_{2}$ norm.
\begin{definition}
The covering radius  $\gamma_{1,2}(\Dcal)$ of a set $\Dcal$ consisting of the columns of matrix $\D$ is defined as,
\begin{equation}
\gamma_{1,2}(\pm \Dcal) = \sup\{\theta_{1,2}(\v,\pm \Dcal), \v\in \Sb^{n-1}\cap \mathrm{span}(\Dcal) \}.
\end{equation}
\end{definition}
Note that the covering radius captures how well-separated the atoms of {\it the blocks} of $\D[i]$ are, and is a decreasing function of the distance between atoms.

Let $\Dcal_{i^{\ast}j^{\ast}}$ be the set that contains the columns of $\Ds[i^{\ast}],\Da[i^{\ast}][j^{\ast}]$, and  $\Dcal^{-}_{i^{\ast}j^{\ast}}$  the set with all remaining columns of the blocks $\Ds[i]$, $\forall  i \in \Ical \setminus i^{\ast}$ and  $\Da[i][j]$ for $i\in \Ical$ and $j\in \Jcal \setminus j^{\ast}$. 
We now provide a sufficient condition, which ensures the recovery of the correct classes of the signal and the attack.
\begin{theorem}
The correct classes of the signal and the attack of an adversarially perturbed signal in $\mathcal{S}_{i^{\ast}}\oplus \mathcal{S}_{i^{\ast} j^{\ast}}$ can be recovered by solving problem \eqref{eq:stand_blk_opt_conv_relax}, if the following primary recovery condition (PRC) %
\begin{equation}
\begin{split}
\gamma_{1,2}(\pm \Dcal_{i^{\ast}j^{\ast}}) 
< \theta_{1,2}(\mathcal{S}^\x_{i^\ast} \oplus \mathcal{S}^{\deltab}_{i^{\ast} j^{\ast}}, 
\pm \Dcal_{i^{\ast}j^{\ast}}^{-})
\end{split}
\label{eq:prc_cond}
\end{equation}
holds for the dictionaries $\Ds$ and $\Da$.
\label{the:prc_cond}
\end{theorem}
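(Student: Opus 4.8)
The plan is to invoke Proposition~\ref{propos}, which reduces the theorem to establishing the scalar inequality $\|\csh\|_{1,2}+\|\cah\|_{1,2} < \|\cst\|_{1,2}+\|\cat\|_{1,2}$ for every $\x'\in\Vcal$ with $\x\neq\mathbf{0}$, where I write $\Vcal:=\mathcal{S}^{\x}_{i^{\ast}}\oplus\mathcal{S}^{\deltab}_{i^{\ast}j^{\ast}}$ for brevity. I would prove this by bounding the correct-class term from above and the wrong-class term from below, each by a multiple of $\|\x'\|_2$, and then reading off that the PRC forces the upper multiple to be strictly below the lower one. A preliminary reduction disposes of the case where the wrong-class problem \eqref{eq:wrong_minim_vec} is infeasible (i.e. $\x'\notin\mathrm{span}(\Dcal^{-}_{i^{\ast}j^{\ast}})$): there its objective is $+\infty$ by convention and the inequality is immediate, while the correct-class problem is always feasible since $\x'\in\Vcal=\mathrm{span}(\Dcal_{i^{\ast}j^{\ast}})$. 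Hence it suffices to treat the feasible case, on which $\|\x'\|_2>0$ because $\x\neq\mathbf{0}$.

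The central tool is the duality characterization $\min\{\|\c\|_{1,2}:\D\c=\x'\}=\max\{\langle\w,\x'\rangle:\|\D^{\top}\w\|_{\infty,2}\leq 1\}$, which identifies $\|\cdot\|_{\infty,2}$ as the norm dual to $\|\cdot\|_{1,2}$. For the wrong-class lower bound I would argue directly rather than dualize: for any feasible $\c$ representing $\x'$ through $\Dcal^{-}_{i^{\ast}j^{\ast}}$, Hölder's inequality for the $(\ell_{1}/\ell_{2},\ell_{\infty}/\ell_{2})$ pair gives $\|\x'\|_2^{2}=\langle(\Dcal^{-}_{i^{\ast}j^{\ast}})^{\top}\x',\c\rangle\leq\|(\Dcal^{-}_{i^{\ast}j^{\ast}})^{\top}\x'\|_{\infty,2}\,\|\c\|_{1,2}$, hence $\|\cst\|_{1,2}+\|\cat\|_{1,2}\geq\|\x'\|_2^{2}/\|(\Dcal^{-}_{i^{\ast}j^{\ast}})^{\top}\x'\|_{\infty,2}$. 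Rewriting the denominator through Definition~\eqref{def:ang_dist} as $\|\x'\|_2\sqrt{m}\cos\theta_{1,2}(\x',\pm\Dcal^{-}_{i^{\ast}j^{\ast}})$ and then replacing the pointwise angle by its infimum over $\Vcal$ via \eqref{def:ang_dist_gen} (cosine is decreasing, so this only weakens the bound) yields $\|\cst\|_{1,2}+\|\cat\|_{1,2}\geq\|\x'\|_2/(\sqrt{m}\cos\theta_{1,2}(\Vcal,\pm\Dcal^{-}_{i^{\ast}j^{\ast}}))$.

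For the correct-class upper bound I would combine duality with the covering radius, which I expect to be the crux. Since $\x'\in\mathrm{span}(\Dcal_{i^{\ast}j^{\ast}})$, the optimal dual vector $\w^{\ast}$ may be taken in $\mathrm{span}(\Dcal_{i^{\ast}j^{\ast}})$ without altering the constraint or the objective. The covering radius then enters as an inradius-type bound: unwinding its definition shows $\cos\gamma_{1,2}(\pm\Dcal_{i^{\ast}j^{\ast}})=\inf\{\tfrac{1}{\sqrt{m}}\|\D^{\top}\v\|_{\infty,2}:\v\in\Sb^{n-1}\cap\mathrm{span}(\Dcal_{i^{\ast}j^{\ast}})\}$, so every unit $\v$ in the span satisfies $\tfrac{1}{\sqrt{m}}\|\D^{\top}\v\|_{\infty,2}\geq\cos\gamma_{1,2}(\pm\Dcal_{i^{\ast}j^{\ast}})$. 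Applying this to the direction of $\w^{\ast}$, dual feasibility $\|\D^{\top}\w^{\ast}\|_{\infty,2}\leq 1$ forces $\|\w^{\ast}\|_2\leq 1/(\sqrt{m}\cos\gamma_{1,2}(\pm\Dcal_{i^{\ast}j^{\ast}}))$, and Cauchy–Schwarz gives $\|\csh\|_{1,2}+\|\cah\|_{1,2}=\langle\w^{\ast},\x'\rangle\leq\|\x'\|_2/(\sqrt{m}\cos\gamma_{1,2}(\pm\Dcal_{i^{\ast}j^{\ast}}))$.

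Combining the two estimates, the strict inequality of Proposition~\ref{propos} holds as soon as $\cos\gamma_{1,2}(\pm\Dcal_{i^{\ast}j^{\ast}})>\cos\theta_{1,2}(\Vcal,\pm\Dcal^{-}_{i^{\ast}j^{\ast}})$, which, since cosine is decreasing on $[0,\pi/2]$, is precisely the PRC \eqref{eq:prc_cond}. I expect the main obstacle to lie in the covering-radius step: rigorously reducing the dual certificate to $\mathrm{span}(\Dcal_{i^{\ast}j^{\ast}})$ and justifying that $\gamma_{1,2}$ controls $\|\w^{\ast}\|_2$, and in particular matching the per-block $\tfrac{1}{\sqrt{m}}$ normalizations so that the prefactors attached to $\gamma_{1,2}$ and $\theta_{1,2}$ cancel. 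When the blocks of $\Dcal_{i^{\ast}j^{\ast}}$ and $\Dcal^{-}_{i^{\ast}j^{\ast}}$ contain differing numbers of columns this normalization must be tracked carefully (or absorbed into the definitions under a common block-size convention); the geometric content — the covering radius of the correct atoms falling below the angular separation from the wrong atoms — is unaffected.
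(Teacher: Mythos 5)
Your proposal is correct and follows essentially the same route as the paper's own proof: reduction to the strict inequality of Proposition~\ref{propos}, an upper bound on the correct-class value via strong duality with the dual certificate projected onto $\mathrm{span}(\Dcal_{i^{\ast}j^{\ast}})$ and its norm controlled by the covering radius (your direct bound on $\|\w^{\ast}\|_2$ is exactly the content of Lemma~\ref{lem:circum_covering_rad}), and a H\"older-type lower bound on the wrong-class value via the angular distance, with the cosines compared through the PRC. The only cosmetic differences are that the paper absorbs the $\sqrt{m}$ factors by rescaling the dictionaries by $1/\sqrt{m}$ whereas you track them explicitly, and that you additionally dispose of the case where the wrong-class problem is infeasible.
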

Theorem~\ref{the:prc_cond} offers a geometric intuition for recovery guarantees. Note that \eqref{eq:prc_cond} depends only on the properties of the dictionaries $\Ds$ and $\Da$. Specifically, \eqref{eq:prc_cond} is easier satisfied when a) the covering radius of $\Dcal_{i^{\ast}j^{\ast}}$  is small, meaning that columns of {\it both} $\Ds[i^{\ast}]$ and $\Da[i^{\ast}][j^{\ast}]$ are well distributed in $\mathcal{S}^\x_{i^\ast}$ and  $\mathcal{S}^{\deltab}_{i^{\ast} j^{\ast}}$ respectively or b) the atoms of the remaining blocks of $\Ds,\Da$ are sufficiently away from $\mathcal{S}_{i^{\ast}}\oplus \mathcal{S}_{i^{\ast} j^{\ast}}$.  

Next, we derive the dual recovery condition (DRC), which only needs to hold a subset of points in $\mathcal{S}_{i^{\ast}}\oplus \mathcal{S}_{i^{\ast} j^{\ast}}$ called as {\it dual points}. Before illustrating the DRC, we first define the polar set induced by the $\ell_{1,2}$ norm and the dual points.
\begin{definition}
The polar of the set $\Dcal$ containing the columns of matrix $\D$ induced by the $\ell_{1,2}$ norm is defined as
\begin{equation}
\mathcal{K}^{o}_{\ell_{1,2}}(\Dcal)= \{ \v \in \mathcal{R}(\D): \frac{1}{\sqrt{m}}\|\D^{\top}\v\|_{\infty,2}\leq 1 \}.
\label{def:relative_polar}
\end{equation}
\end{definition}
where $\mathcal{R}(\D)$ is the range of $\D$.
\begin{definition}
The set of dual points of matrix $\D$, denoted as $\mathcal{A}(\Dcal)$, is the set of extreme points of $\mathcal{K}^{o}_{\ell_{1,2}}(\Dcal)$, which is the polar set of $\Dcal$.
\end{definition} 
\begin{theorem}
The correct classes of the signal and the attack can be recovered by solving problem \eqref{eq:stand_blk_opt_conv_relax}, if the following dual recovery condition (DRC) is satisfied
\begin{equation}
\gamma_{1,2}(\Dcal_{i^{\ast}j^{\ast}}) <  \theta_{1,2}\left(\mathcal{A}\left(\Dcal_{i^{\ast}j^{\ast}}\right), \pm \Dcal_{i^{\ast}j^{\ast}}^{-}\right),
\end{equation}
\label{the:drc_cond}
\end{theorem}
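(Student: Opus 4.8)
The plan is to prove recovery by exhibiting a dual certificate, rather than comparing the primal norms of Proposition~\ref{propos} directly. By convex duality for the nonsmooth $\ell_1/\ell_2$ program \eqref{eq:stand_blk_opt_conv_relax}, the correct-class pair $\{\csh,\cah\}$, extended by zeros on every wrong block, is the \emph{unique} minimizer (so that \eqref{prop:nec_suf_rec} holds) as soon as there is a vector $\boldsymbol{\nu}\in\Range(\D_{i^{\ast}j^{\ast}})$ — where $\D_{i^{\ast}j^{\ast}}$ is the matrix whose columns form $\Dcal_{i^{\ast}j^{\ast}}$ — such that (i) $\boldsymbol{\nu}$ is a valid block subgradient on the two active blocks, so that, after the normalization of \eqref{def:relative_polar}, $\boldsymbol{\nu}$ lies on the boundary of the polar set $\Kcal^{o}_{\ell_{1,2}}(\Dcal_{i^{\ast}j^{\ast}})$, and (ii) $\boldsymbol{\nu}$ strictly dominates all remaining blocks, i.e. $\frac{1}{\sqrt{m}}\|\Ds[i]^{\top}\boldsymbol{\nu}\|_{2}<1$ for $i\neq i^{\ast}$ and $\frac{1}{\sqrt{m}}\|\Da[i][j]^{\top}\boldsymbol{\nu}\|_{2}<1$ for $j\neq j^{\ast}$. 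The first task is to show that (i)--(ii) indeed force \eqref{prop:nec_suf_rec}, which follows from feasibility together with complementary slackness on the active blocks.

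Next I would reduce the verification of (ii) to the dual points $\mathcal{A}(\Dcal_{i^{\ast}j^{\ast}})$. For a fixed $\x'$, the certificate arises as a maximizer of the linear functional $\boldsymbol{\nu}\mapsto\langle\x',\boldsymbol{\nu}\rangle$ over the polar set $\Kcal^{o}_{\ell_{1,2}}(\Dcal_{i^{\ast}j^{\ast}})$, which is convex and compact inside $\mathrm{span}(\Dcal_{i^{\ast}j^{\ast}})$. Since a linear functional attains its maximum over a compact convex set at an extreme point, the certificate can always be chosen to be a dual point $\boldsymbol{\nu}\in\mathcal{A}(\Dcal_{i^{\ast}j^{\ast}})$. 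Letting $\x'$ range over $\mathcal{S}^{\x}_{i^{\ast}}\oplus\mathcal{S}^{\deltab}_{i^{\ast} j^{\ast}}$, every relevant certificate is therefore one of these dual points, so it suffices to enforce (ii) on $\mathcal{A}(\Dcal_{i^{\ast}j^{\ast}})$ — a set of Lebesgue measure zero. This is precisely the mechanism by which the DRC relaxes the PRC of Theorem~\ref{the:prc_cond}, whose right-hand side is instead an infimum over the whole subspace.

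The third step rewrites (ii) in terms of the covering radius and the angular distance. Writing $\boldsymbol{\nu}=\|\boldsymbol{\nu}\|_{2}\,u$ with $u\in\Sb^{n-1}\cap\mathrm{span}(\Dcal_{i^{\ast}j^{\ast}})$, the defining relation \eqref{def:ang_dist} gives $\frac{1}{\sqrt{m}}\|\D_{i^{\ast}j^{\ast}}^{\top}\boldsymbol{\nu}\|_{\infty,2}=\|\boldsymbol{\nu}\|_{2}\cos\theta_{1,2}(\boldsymbol{\nu},\pm\Dcal_{i^{\ast}j^{\ast}})$, while the definition of the covering radius yields $\theta_{1,2}(\boldsymbol{\nu},\pm\Dcal_{i^{\ast}j^{\ast}})\le\gamma_{1,2}(\Dcal_{i^{\ast}j^{\ast}})$. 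Since $\boldsymbol{\nu}\in\Kcal^{o}_{\ell_{1,2}}$ makes the left-hand side at most $1$, this gives the certificate-norm bound $\|\boldsymbol{\nu}\|_{2}\le\sec\gamma_{1,2}(\Dcal_{i^{\ast}j^{\ast}})$. Letting $\D^{-}_{i^{\ast}j^{\ast}}$ denote the matrix with columns $\Dcal^{-}_{i^{\ast}j^{\ast}}$, any wrong block obeys $\frac{1}{\sqrt{m}}\|\Ds[i]^{\top}\boldsymbol{\nu}\|_{2}\le\frac{1}{\sqrt{m}}\|(\D^{-}_{i^{\ast}j^{\ast}})^{\top}\boldsymbol{\nu}\|_{\infty,2}=\|\boldsymbol{\nu}\|_{2}\cos\theta_{1,2}(\boldsymbol{\nu},\pm\Dcal^{-}_{i^{\ast}j^{\ast}})$, which combines with the norm bound to give $\frac{1}{\sqrt{m}}\|\Ds[i]^{\top}\boldsymbol{\nu}\|_{2}\le\cos\theta_{1,2}(\boldsymbol{\nu},\pm\Dcal^{-}_{i^{\ast}j^{\ast}})/\cos\gamma_{1,2}(\Dcal_{i^{\ast}j^{\ast}})$, and likewise for the attack blocks. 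The DRC $\gamma_{1,2}(\Dcal_{i^{\ast}j^{\ast}})<\theta_{1,2}(\mathcal{A}(\Dcal_{i^{\ast}j^{\ast}}),\pm\Dcal^{-}_{i^{\ast}j^{\ast}})$ makes the numerator strictly smaller than the denominator at every dual point, so the ratio is below $1$ and (ii) holds.

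I expect the main obstacle to be the first step: constructing the certificate and justifying that it may be taken at an extreme point. This requires translating the KKT/subgradient optimality conditions of the nonsmooth $\ell_1/\ell_2$ objective — carrying the $1/\sqrt{m}$ normalizations of \eqref{def:ang_dist} and \eqref{def:relative_polar} — into membership in $\Kcal^{o}_{\ell_{1,2}}$, and checking complementary slackness so that (i)--(ii) genuinely certify \emph{strict} optimality in \eqref{prop:nec_suf_rec}. One must also respect the coupling between the signal block $i^{\ast}$ and the signal--attack block $(i^{\ast},j^{\ast})$ noted in the Remark, and handle the degenerate case where the dual maximizer lies on a higher-dimensional face rather than at a single extreme point; such ties can be resolved by a limiting/perturbation argument, as in the subspace-sparse recovery analysis of \cite{You:ICML15}.
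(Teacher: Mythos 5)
Your proposal is correct and takes essentially the same route as the paper's own proof: your third step is the paper's Lemma~\ref{lem:the_drc_1} (bounding wrong-block correlations at dual points via the covering-radius/circumradius relation of Lemma~\ref{lem:circum_covering_rad} together with the DRC), and your first two steps are the paper's Lemma~\ref{lem:the_drc_2} (the dual optimum of the correct-class program, taken at an extreme point of the polar set, acts as the certificate, and a H\"older-type inequality plus strong duality yields the strict inequality of Proposition~\ref{propos}). There are no substantive differences to flag.
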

Theorem \ref{the:drc_cond} requires the covering radius of $\Dcal_{i^{\ast}j^{\ast}}$ to be smaller than the minimum angular distance between the dual points of $\Dcal_{i^{\ast}j^{\ast}}$, which form a set of Lebesgue measure zero, and elements of the set $\Dcal_{i^{\ast}j^{\ast}}^{-}$.

\section{Experiments}

In this section, we present experiments on the Extended YaleB Face dataset and the MNIST dataset. 

\subsection{Experimental Setup}

\myparagraph{Network architectures and attack evaluation} For the YaleB Face dataset, we train a simple 3-layer fully-connected ReLU neural network with 256 hidden units per layer, which already serves as a strong baseline obtaining 96.3\% accuracy. For the MNIST dataset, we train a 4-layer convolutional network, identical to the architecture from \cite{carliniwagner}. 
All networks are trained with a cross-entropy loss. 

We consider the family of $\{\ell_1, \ell_2, \ell_\infty\}$ PGD attacks. $\ell_1$ PGD refers to the Sparse $\ell_1$ PGD attack from \cite{tramer2019adversarial}. For optimization, we use the active set homotopy algorithm developed in Section 4.2.  The Appendix contains full experimental details. 

\myparagraph{Metrics} We choose $\Ds$ as a dictionary whose columns are the flattened training images, and $\Da$ is a dictionary whose columns are the $\ell_p$ perturbations for each training image. For each block in $\Ds$ and $\Da$, we subsample $200$ training datapoints to limit the dictionary size, and we normalize the columns of the dictionary to unit $\ell_2$ norm to keep the same scaling for all blocks. For a given perturbed image $\x'$, we run Algorithm \ref{alg:activeset_homotopy} to obtain the output coefficients $\csht$ and $\caht$. We define the predicted block indices for the signal and attack dictionaries to be:
\begin{align}
    \label{eq:ihat} \hat{i} &= \argmin_i \| \x' - \Ds[i] \csht[i] - \Da \caht \|_2 \\
    \label{eq:jhat} \hat{j} &= \argmin_j \| \x' - \Ds \csht - \Da[\hat{i}][j] \caht[\hat{i}][j] \|_2
\end{align}
Using these indices, we define two classification methods and one attack detection method:
\begin{enumerate}
    \item \textit{SBSC (Structured Block-Sparse Classifier)}: This method predicts the class of the test image as $\hat{i}$.
    \item \textit{SBSC+CNN (Denoiser)}: From $\csht$, this method computes a denoised image as $\hat{\x}= \Ds[\hat{i}] \csht[\hat{i}]$ and then predicts the class of the test image from the output of the original network at the denoised datapoint, i.e. $f_{\theta}(\hat{\x})$ 
    \item \textit{SBSAD (Structured Block-Sparse Attack Detector)}: This method returns $\hat{j}$, which represents the predicted attack type of the test image.
\end{enumerate}

For each method, we report the accuracy of prediction with respect to the ground truth label. For the SBSC and SBSC+CNN methods, the label is the correct label of the test image, while for SBSAD, the label is the true $\ell_p$ perturbation type that was applied to the test image. As a naive block-sparse classifier baseline, we denote BSC as a block-sparse classifier which does not model the structure of the attack perturbation, but simply models $\x' = \Ds \cs$ \cite{Elhamifar:TSP12}. We also consider a BSC+CNN baseline, which predicts the class from  $f_{\theta}(\Ds[\hat{i}] \csht[\hat{i}])$ as above, except $\csht$ is obtained from the BSC problem.

\begin{table*}[h!]
\centering
\caption{Adversarial image and attack classification accuracy on YaleB dataset. BSC denotes the block-sparse classifier baseline, SBSC denotes the structured block-sparse signal classifier, SBSC+CNN denotes the denoised model, and SBSAD denotes the structured block-sparse attack detector. }
\begin{tabular}{c||ccccc||c}
 \toprule
  \textbf{Yale-B} & CNN & BSC & BSC+CNN & SBSC & SBSC+CNN & SBSAD \\
 \midrule
 $\ell_\infty$ PGD ($\epsilon = 0.02$) & 15.1\% & 79\% & 2\% & \textbf{97\%} & 93\% & 52\%  \\
 $\ell_2$ PGD ($\epsilon = 0.75$) & 4.2\% & 51\% & 2\% & \textbf{96\%} & 87\% & 76\%  \\
 $\ell_1$ PGD ($\epsilon = 15$) & 53.7\% & 81\% & 3\% & \textbf{96\%} & 93\% & 39\% \\
 \midrule
 Average & 24.3\% & 70.3\% & 2.3\% & \textbf{96.3\%} & 91\% & 55.7\% \\
 \bottomrule
\end{tabular}
\label{table:yaleb_def}
\end{table*}

\begin{table*}[h!]
\centering
\caption{Adversarial image and attack classification accuracy on digit classification of MNIST dataset. See above table for column descriptions. The clean accuracy represents the accuracy of the method with unperturbed test inputs.}
\label{table:mnist_def}
\resizebox{\textwidth}{!}{
\begin{tabular}{c||c@{\;\;}c@{\;\;}c@{\;\;}c@{\;\;}c@{\;\;}c@{\;\;}c||c@{\;\;}c@{\;\;}c||c} 
 \toprule
\textbf{MNIST} & CNN & $M_\infty$ & $M_2$ & $M_1$ & MAX & AVG & MSD & BSC & SBSC & SBSC+CNN & SBSAD  \\
 \midrule
 Clean accuracy & 98.99\% & 99.1\% & 99.2\% & 99.0\% & 98.6\% & 98.1\% & 98.3\% & 92\% & 94\% & 99\% & - \\
 $\ell_\infty$ PGD ($\epsilon = 0.3$) & 0.03\% & \textbf{90.3\%} & 0.4\% & 0.0\% & 51.0\% & 65.2\% & 62.7\%  & 54\% & 77.27\% & 76.83\% & 73.2\% \\
 $\ell_2$ PGD ($\epsilon = 2.0$) & 44.13\% & 68.8\% & 69.2\% & 38.7\% & 64.1\% & 67.9\% & 70.2\% & 76\% & \textbf{85.34\%} & 85.17\% & 46\% \\
 $\ell_1$ PGD ($\epsilon = 10.0$) & 41.98\% & 61.8\% & 51.1\% & 74.6\% & 61.2\% & 66.5\% & 70.4\% & 75\% & \textbf{85.97\%} & 85.85\% & 36.6\% \\
 \midrule
 Average & 28.71\% & 73.63\% & 40.23\% & 37.77\% & 58.66\% & 66.53\% & 67.76\% & 68.33\% & \textbf{82.82\%} & 82.61\% & 51.93\% \\
 \toprule
 \textbf{Unseen Attacks} & & & & & & & & & & \\
 \midrule
 $\ell_\infty$ MIM ($\epsilon = 0.3$) & 0.02\% & \textbf{92.3\%} & 11.2\% & 0.1\% & 70.7\% & 76.7\% & 71.0\%  & 59.5\% & 74.3\% & 74.2\% & 79.0\% \\
 $\ell_2$ C-W ($\epsilon = 2.0$) & 0\% & 79.6\% & 74.5\% & 44.8\% & 72.1\% & 72.4\% & 74.5\% & \textbf{89.1\%} & 87.1\% & 87.1\% & 60.4\% \\
 $\ell_2$ DDN ($\epsilon = 2.0$) & 0\% & 63.9\% & 70.5\% & 40.0\% & 62.5\% & 64.6\% & 69.5\% & \textbf{88.8\%} & 87.2\% & 87.1\% & 57.8\% \\
 \midrule
 Average & 0\% & 78.6\% & 52.06\% & 28.3\% & 68.43\% & 71.23\% & 71.66\% & 79.13\% & \textbf{82.86\%} & 82.8\% & 65.73\% \\
 \bottomrule
\end{tabular}}
\end{table*}

\subsection{YaleB Face Dataset}

We first evaluate our method on images from the Extended YaleB Face Dataset \cite{lee2005acquiring}, a 38-way classification task. While the adversarial learning literature does not usually evaluate attacks on this dataset, we choose it because it exhibits the self-expressiveness property. Indeed, face images of an individual under varying lighting conditions have been shown to lie in a low-dimensional subspace \cite{Belhumeur:IJCV98,Basri:PAMI03,Ho:CVPR03}. Our goal is to complement our theoretical recovery guarantees by demonstrating the effectiveness of our approach in determining the correct signal and attack type, which is illustrated in Table \ref{table:yaleb_def}. For all perturbation types, we observe the SBSC approach significantly improves upon the accuracy of the undefended model, indicating the successful decoupling of the signal and attack. One phenomenon we see is the remarkable robustness of block-sparse classifiers, even without attack modelling. The BSC baseline consistently improves the adversarial accuracy of the undefended model; however, the low BSC+CNN accuracy indicates that there is still significant noise in the data modelling. On the other hand, the SBSC and SBSC+CNN are able to improve over the BSC baseline by around 20\%, indicating that explicitly modelling the attack helps signal classification for both the block-sparse classifier as well as the original classification network. 
\subsection{MNIST}
Despite the simplicity of the MNIST dataset, networks trained on MNIST are still brittle to attacks that arise from a union of perturbations. Specifically, in \cite{maini2020adversarial}, the authors observe that most state-of-the-art adversarial training defenses for MNIST are only robust to one type of $\ell_p$ attack.  

\myparagraph{Baselines} While we emphasize that our approach is not primarily a defense, but rather a principled attack classification and signal decoupling algorithm, we can still compare our signal classification accuracy to a variety of state of the art methods for defending against a union of attacks. 
First, we consider classifiers $M_1$, $M_2$, $M_\infty$ trained with adversarial training \cite{madry2017towards} against $\ell_1$, $\ell_2$, or $\ell_\infty$\ perturbations, respectively. Next, we compare against variants of adversarial training: the MAX, AVG and MSD approaches \cite{maini2020adversarial,tramer2019adversarial}. Finally, we compare against the BSC baseline. 

\myparagraph{Quality of Defense} Table \ref{table:mnist_def} summarizes our results on the MNIST dataset. The top half demonstrates that our proposed block-sparse approach improves upon state of the art adversarial training defenses against a union of attacks from $\ell_2$, $\ell_1$, and $\ell_\infty$ PGD attacks by about 15\% on average. The high accuracy of the Denoiser+CNN model also shows that $\Ds[\hat{i}] \csht[\hat{i}]$ is a good model of the denoised data. Surprisingly, even though the block-sparse classifier is not the strongest baseline for MNIST, as indicated by the relatively low clean accuracy of 94\%, we observe that it is much more robust to $\ell_p$ perturbations than the neural network models as the strength of the attack increases. 

\myparagraph{Performance on unseen test-time attacks} Our dictionary $\Da$ consists of $\{\ell_1, \ell_2, \ell_\infty\}$ PGD attacks, so the SBSAD must predict one of these three classes. However, we can evaluate our method on test-time attacks that are non-PGD $\ell_p$ attacks for $p \in \{1,2, \infty\}$. The second half of Table \ref{table:mnist_def} demonstrates the accuracy of our method on the $\ell_\infty$ Momentum Iterative Method (MIM) \cite{dong2018boosting}, the $\ell_2$ Carlini-Wagner (C-W) attack \cite{carlini_towards_2017}, and the $\ell_2$ Decoupled Direction and Norm (DDN) attack \cite{rony2019decoupling}. The block-sparse baseline performs remarkably well at denoising even though it does not model the attack structure. In principle, it does not make sense for our method to capture these attacks through $\Da$; however on average, we still observe a slight increase in accuracy by modelling some portion of the perturbation through the SBSC method. Perhaps more surprisingly, our method still has high attack classification accuracy, indicating that for the purposes of determining the attack family, the attacks can be well-approximated by a linear combination of PGD attacks.

\section{Conclusion}

In this paper, we studied the conditions under which we can reverse engineer adversarial attacks by determining the type of attack from a corrupted signal. We provided a structured block-sparse optimization approach to model not only the signal as a block-sparse combination of datapoints, but also the attack perturbation as a block-sparse combination of attacks. Under this optimization approach, we derived theoretical conditions under which recovery of the correct signal and attack type is feasible.  Finally, we experimentally verified the validity of the structured block-sparse optimization approach on the YaleB and MNIST datasets. We believe there are many directions to further study the properties of block-sparse classifiers, such as introducing non-linear embedding dictionaries.

\bibliography{adversarial,learning,recognition,sparse,vidal,vision}
\bibliographystyle{plain}



\newpage
\appendix
\onecolumn

\section{Theoretical Results}
We first give the proof of Proposition 5.1, which is based on the proofs of relevant results in subspace-sparse recovery (Theorem 2, \cite{Elhamifar:TPAMI13,You:arxiv15-SSR,You:ICML15}) and atomic representation-based recovery (Lemma 2, \cite{Wang:PAMI2017}.
\subsection{Proof of Proposition 5.1.}
\begin{proof}
($\implies$)

We first prove that \eqref{prop:nec_suf_rec} is a sufficient condition for recovering the correct class $i^*$ of the signal $\x$ and $(i^*j^*)$ of the attack $\deltab$. Let $\cs^\ast,\ca^\ast$ be optimal solutions of the problem
\begin{equation}
\begin{split}
    \{\cs^*,\ca^*\} \equiv \argmin_{\cs,\ca}{\|\cs\|_2+\|\ca\|_2} \\ 
    \st \x' = \Ds\cs + \Da\ca,
    \end{split}
    \label{prop:proof_orig_prop}
\end{equation}	
 The correct classes of the signal and the attack can be recovered when $\cs^\ast[i] = 0$ for $i\neq i^\ast$ and $\ca^\ast[i][j]=0$ for $i\neq i^*$ and $j\neq j^*$. We prove the sufficiency of condition of \eqref{prop:nec_suf_rec} for correct recovery of the classes of signal and attack by contradiction. Let us assume that there exist $\cs[i]\neq 0$ for $i\neq i^*$ and  $\ca^\ast[i][j]\neq 0$ for $i\neq^*,j\neq j^*$ and 
  define a vector $\tilde{\x}$ as,
\begin{equation}
\tilde{\x} = \x' - \Ds[i^*]\cs^*[i^*] + \Da[i^*][j^*]\ca[i^*][j^*] = \sum_{i\neq i^*}\Ds[i]\cs^\ast[i] + \sum_{i\in \Ical,j\neq j*}\Da[i][j]\ca^*[i][j].
\label{eq:proof_prop}
\end{equation}
%
Since $\x'\in \Scal^{\x}_{i^*}\oplus \Scal^{\deltab}_{i^*j^*}$, from \eqref{eq:proof_prop} we deduce that $\tilde{\x}$ will have a representation on $\Scal^{\x}_{i^*}\oplus \Scal^{\deltab}_{i^*j^*}$, which will be a feasible solution of the following block-sparse optimization problem.
\begin{equation}
\begin{split}
    \{\cshx,\cahx\} \equiv \argmin_{\cs,\ca}{\|\cs[i^\ast]\|_2+\|\ca[i^\ast][j^\ast]\|_2} \\ 
    \st \tilde{\x} = \Ds[i^\ast]\cs[i^\ast] + \Da[i^\ast][j^\ast]\ca[i^\ast][j^\ast],
    \end{split}
    \label{prop:proof_eq1}
\end{equation}	
where  $\csh,\cah$ are the correct-class minimum $\ell_1/\ell_2$ norm vectors supported on $\csh[i^*]$ and $\cah[i^*][j^*]$.
Moreover, from the \eqref{eq:proof_prop} we can also see that  $\tilde{\x}$ is also belong to the span of the union of subspaces of remaining blocks of the dictionaries and hence the following problem  
\begin{equation}
\begin{split}
   \{\cstx,\catx \} \equiv  \argmin_{\cs,\ca}\sum_{i\in \Ical \setminus \{i^{\ast}\} } \|\cs[i]\|_2  
    +\sum_{i\in \Ical , j\in \Jcal\setminus \{j^{\ast}\} }  \|\ca[i][j]\|_2   \\
   \st  \tilde{\x} = \sum_{i \in \Ical \setminus \{i^{\ast}\}}~\Ds[i]\cs[i]+\sum_{i\in \Ical, j\in \Jcal\setminus  \{j^{\ast}\} }~\Da[i][j]\ca[i][j] 
        \end{split}
        \label{prop:wrong_blk_prob}
\end{equation}
with $\Ical = \{1,2,\dots,r\}$ and $\Jcal = \{1,2,\dots,a\}$, will have feasible solutions. 
From \eqref{eq:proof_prop}, we can get,
\begin{equation}
\begin{split}
\x' &= \tilde{\x} + \Ds[i^*]\cs^*[i^*] + \Da[i^*][j^*]\ca^*[i^*][j^*] \\
 & = \Ds[i^*]\left(\cshx[i^*] + \cs^*[i^*]\right) + \Da[i^*][j^*]\left(\cahx[i^*][j^*] + \ca^*[i^*][j^*]\right)
\end{split}
\label{eq:prop_proof_eq2}
\end{equation}
From \eqref{eq:prop_proof_eq2} we can see that vectors the pair of vectors  $\cshx + \cs^*$ supported on the $i^*$th block and $\cahx + \ca^*$, supported on the $(i^*,j^*)$th block will be a feasible solution of  \eqref{prop:proof_orig_prop}.
We will have,
\begin{equation}
\begin{split}
  \|\cshx[i^*] + \cs^*[i^*]\|_2 +  \|\cahx[i^*][j^*] + \ca^*[i^*][j^*]\|_2 & \leq \\
  \|\cshx[i^*]\|_2 +  \|\cs^*[i^*]\|_2\|_2 + \|\cshx[i^*][j^*]\|_2 + \|\ca^*[i^*][j^*]\|_2  \\ <
   \|\cstx\|_{1,2} + \|\cs^*[i^*]\|_2 + \|\catx\|_{1,2} + \|\ca^*[i^*][j^*]\|_2 \leq \\
   \sum_{i\neq i^*}\|\cs^\ast[i]\|_2 + \|\cs^*[i^*]\|_2\|_2 + \sum_{i\in \Ical,j\neq j*}\|\ca^*[i][j]\|_2 + \|\|\ca^*[i^*][j^*]\|_2 =\|\cs^*\|_{1,2} + \|\ca^*\|_{1,2},
\end{split}
\label{prop:proof_ineq}
\end{equation}
where the second to the last inequality comes for the condition \eqref{prop:nec_suf_rec} of the Proposition. The last inequality in \eqref{prop:proof_ineq} appears due to optimality of $\cst,\cat$ in \eqref{prop:wrong_blk_prob} and the fact that a vector supported on the blocks of $\cs^*[i]$ for $i\neq i^*$ and $\ca^*[i][j]$ for $i\neq i^*$, $j\neq j^*$ is also a feasible solution of \eqref{prop:wrong_blk_prob}, yet not optimal. Hence we have  arrived at a contradiction since by optimality of $\cs^*,\ca^*$ the inequality $\|\cshx + \cs^*\|_{1,2} + \|\cahx + \ca^*\|_{1,2}<\|\cs^*\|_{1,2} + \|\ca^*\|_{1,2}$ can not be true. We thus proved that the sufficiency of the condition \eqref{prop:nec_suf_rec} for correct recovery of signal and attack classes.

($\Longleftarrow$) Let first define $\cshxx,\cahxx,\cstxx,\catxx$ as
\begin{equation}
\begin{split}
    \{\cshx,\cahx\} \equiv \argmin_{\cs,\ca}{\|\cs[i^\ast]\|_2+\|\ca[i^\ast][j^\ast]\|_2} \\ 
    \st \x' = \Ds[i^\ast]\cs[i^\ast] + \Da[i^\ast][j^\ast]\ca[i^\ast][j^\ast],
    \end{split}
    \label{prop:proof_eq2}
\end{equation}	
and the {\it wrong-class minimum $\ell_{1}/\ell_{2}$ norm vectors} $\cst,\cat$ as,
\begin{equation}
\begin{split}
   \{\cstxx,\catxx \} \equiv  \argmin_{\cs,\ca}\sum_{i\in \Ical \setminus \{i^{\ast}\} } \|\cs[i]\|_2  
    +\sum_{i\in \Ical , j\in \Jcal\setminus \{j^{\ast}\} }  \|\ca[i][j]\|_2   \\
   \st  \x' = \sum_{i \in \Ical \setminus \{i^{\ast}\}}~\Ds[i]\cs[i]+\sum_{i\in \Ical, j\in \Jcal\setminus  \{j^{\ast}\} }~\Da[i][j]\ca[i][j] 
        \end{split}
\label{prop:wrong_prob}
\end{equation}

Recall that the correct classes of the signal and the attack for an $\x'\in \Scal^\x_{i^*}\oplus \Scal^{\deltab}_{i^*j^*}$ can be recovered when the optimal $\cs^*,\ca^*$ are non-zero only at blocks $\cs^*[i^*]$ and $\ca^*[i^*][j^*]$. In that case, it also holds that $\|\cs^*\|_{1,2} + \|\ca^*\|_{1,2} = \|\cshxx\|_{1,2} + \|\cahxx\|_{1,2}$. We will show that if the correct classes of the signal and the attack can be recovered for $\x'$ then the condition \eqref{prop:nec_suf_rec} is true. 

For that we assume that the solution $\cstxx,\catxx$ is also feasible for problem \eqref{prop:proof_orig_prop} otherwise condition \eqref{prop:nec_suf_rec} is trivially satisfied since the RHS of \eqref{prop:nec_suf_rec} becomes $+\infty$.

Assume now that condition \eqref{prop:nec_suf_rec} is not true, i.e.,
\begin{equation}
\begin{split}
    \|\cshx\|_{1,2} + \|\cahx\|_{1,2} \geq \|\cstxx\|_{1,2} + \|\catxx\|_{1,2} 
\end{split}
\label{prop:ineq_cont}
\end{equation}
that will imply,
\begin{equation}
 \|\cs^*\|_{1,2} + \|\ca^*\|_{1,2} \geq \|\cstxx\|_{1,2} + \|\catxx\|_{1,2}
\end{equation}
and from optimality of $\cs^*,\ca^*$ and feasibility of $\cshxx,\cahxx$ at problem \eqref{prop:proof_orig_prop}, we will have that equality will hold, i.e.,
\begin{equation}
\|\cs^*\|_{1,2} + \|\ca^*\|_{1,2} = \|\cstxx\|_{1,2} + \|\catxx\|_{1,2}.
\end{equation}
The latter means that there will be an optimal solution $\{\cstxx,\catxx\}$ of \eqref{prop:proof_orig_prop} with non-zero blocks at indices corresponding to wrong classes of the signal and the attack when \eqref{prop:ineq_cont} holds true (i.e. condition \eqref{prop:nec_suf_rec} is false) which contradicts the initial assumption for the correct recovery of the classes of the signal and the attack. 
\end{proof}
Let  $\Dcal_{i^{\ast}j^{\ast}}$ be the set of atoms, which contains the columns of the blocks of dictionaries of the signal and the attack that correspond to the correct classes i.e., $[\Ds[i^{\ast}], \Da[i^{\ast}][j^{\ast}]]$. 
Recall from \eqref{def:relative_polar} that the relative polar set of $\pm \Dcal_{i^{\ast}j^{\ast}}$ induced by the $\ell_{1,2}$ norm is given as,
\begin{equation}
\mathcal{K}^{o}_{\ell_{1,2}}(\pm \Dcal_{i^{\ast}j^{\ast}})= \{ \v \in \mathrm{span}(\Scal_{i^{\ast}} \cup  \Scal_{i^{\ast}j^{\ast}}) : \frac{1}{\sqrt{m}}\|[\Ds[i^{\ast}],\Da[i^{\ast}][j^{\ast}]]^{\top}\v\|_{\infty,2}\leq 1 \}
\end{equation}
where $\mathrm{span}(\Scal_{i^{\ast}} \cup  \Scal_{i^{\ast}j^{\ast}})$ is the column-space of $[\Ds[i^{\ast}],\Da[i^{\ast}][j^{\ast}]]^{\top}]$.
Next we define the circumradius of a convex body.
\begin{definition}
(Circumradius) The circumradius of a convex body $\mathcal{P}$ denoted as $R(\mathcal{P})$ is defined as the radius of the smallest euclidean ball containing $\mathcal{P}$.
\end{definition}
In our case, we will use the circumradius of the convex hull of the set $\mathcal{K}^{o}_{\ell_{1,2}}(\pm \Dcal_{i^{\ast}j^{\ast}})$, denoted as $R(\mathcal{K}^{o}_{\ell_{1,2}}(\pm \Dcal_{i^{\ast}j^{\ast}}))$.

Lemma \ref{lem:circum_covering_rad} shows  the relationship between the covering radius of a set induced be $\ell_{1,2}$ norm and corresponding circumradius of its relative polar set.
\begin{lemma}
It holds that $\cos\left(\gamma_{1,2}(\pm \Dcal)\right) = \frac{1}{ R( \Kcal^{o}_{\ell_{1,2}}(\pm \Dcal))}$.
\label{lem:circum_covering_rad}
\end{lemma}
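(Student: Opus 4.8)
The plan is to reduce the identity to an elementary duality between a norm and the circumradius of its unit ball. First I would unpack the covering radius. Since $\cos$ is strictly decreasing on $[0,\pi]$, applying it to the supremum that defines $\gamma_{1,2}(\pm\Dcal)$ converts the supremum into an infimum, and on the sphere $\|\v\|_2=1$, so
\[
\cos\left(\gamma_{1,2}(\pm\Dcal)\right) = \inf_{\v\in\Sb^{n-1}\cap\,\mathrm{span}(\Dcal)} \frac{1}{\sqrt{m}}\left\|\D^\top\v\right\|_{\infty,2}.
\]

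Next I would identify the map $f(\v):=\frac{1}{\sqrt{m}}\|\D^\top\v\|_{\infty,2}$ as a genuine norm on $\mathrm{span}(\Dcal)=\Range(\D)$. Absolute homogeneity and the triangle inequality are inherited directly from $\|\cdot\|_{\infty,2}$, while positive-definiteness on $\Range(\D)$ follows because $\D^\top\v=\mathbf{0}$ forces $\v\in\Range(\D)\cap\ker(\D^\top)=\Range(\D)\cap\Range(\D)^\perp=\{\mathbf{0}\}$. By the definition of the polar set in \eqref{def:relative_polar}, the body $\Kcal^o_{\ell_{1,2}}(\pm\Dcal)=\{\v\in\Range(\D):f(\v)\le 1\}$ is then exactly the unit ball of $f$.

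Then I would compute the circumradius. Because $f$ is a norm, its unit ball is convex and centrally symmetric, so a standard recentering argument (using the parallelogram law to show that any enclosing ball $B(\c,\rho)$ can be replaced by $B(\mathbf{0},\sqrt{\rho^2-\|\c\|_2^2})$) shows the smallest enclosing Euclidean ball is centered at the origin. Hence $R(\Kcal^o_{\ell_{1,2}}(\pm\Dcal)) = \sup\{\|\v\|_2: f(\v)\le 1\}$, and by positive homogeneity of both $\|\cdot\|_2$ and $f$ this equals $\sup_{\v\neq\mathbf{0}}\|\v\|_2/f(\v) = 1/\inf_{\|\v\|_2=1}f(\v)$, the infimum running over $\Sb^{n-1}\cap\mathrm{span}(\Dcal)$. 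Combining this with the first display gives the claimed $\cos\left(\gamma_{1,2}(\pm\Dcal)\right)=1/R(\Kcal^o_{\ell_{1,2}}(\pm\Dcal))$.

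The main point requiring care is showing that $\inf_{\|\v\|_2=1} f(\v)$ is strictly positive, so that both sides are finite and the reciprocal is well-defined. I would obtain this from compactness of $\Sb^{n-1}\cap\mathrm{span}(\Dcal)$ together with the continuity and positive-definiteness of the norm $f$, which guarantees the infimum is attained at a positive value; this simultaneously certifies $\gamma_{1,2}(\pm\Dcal)<\pi/2$ and that $\Kcal^o_{\ell_{1,2}}(\pm\Dcal)$ is bounded (so its circumradius is finite). The only other subtlety is the recentering step for the circumradius, which I would state explicitly since it is what licenses replacing ``radius of the smallest enclosing ball'' by ``largest $\ell_2$ norm over the body''; central symmetry makes it routine.
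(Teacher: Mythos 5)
Your proposal is correct, and its first step (applying $\cos$ to turn the supremum defining $\gamma_{1,2}$ into an infimum over $\Sb^{n-1}\cap\mathrm{span}(\Dcal)$) is identical to the paper's. Where you diverge is in how the reciprocal identity is established. The paper writes the circumradius directly as $R(\Kcal^{o}_{\ell_{1,2}}(\pm\Dcal))=\sup\{\|\v\|_2:\frac{1}{\sqrt{m}}\|\D^\top\v\|_{\infty,2}\le 1,\ \v\in\mathrm{span}(\Dcal)\}$ and then proves
\[
\inf_{\v\in \Sb^{n-1}\cap \mathrm{span}(\Dcal)}\tfrac{1}{\sqrt{m}}\|\D^{\top}\v\|_{\infty,2} \;=\; \frac{1}{\sup\{\|\v\|_{2}:\ \tfrac{1}{\sqrt{m}}\|\D^{\top}\v\|_{\infty,2}\leq 1,\ \v\in \mathrm{span}(\Dcal)\}}
\]
by a hands-on, two-sided comparison: it takes optimizers $\w^\ast,\v^\ast$ of the two problems, forms the rescaled points $\bar{\v}=\sqrt{m}\,\w^\ast/\|\D^\top\w^\ast\|_{\infty,2}$ and $\bar{\w}=\v^\ast/\|\v^\ast\|_2$, checks each is feasible for the other problem, and derives the two inequalities that force equality. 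You instead recognize $f(\v)=\frac{1}{\sqrt m}\|\D^\top\v\|_{\infty,2}$ as a norm on $\Range(\D)$, identify the polar set as its unit ball, and get the same identity in one line from positive homogeneity; your rescalings are the same as the paper's, just packaged abstractly rather than applied to specific optimizers. Your route also buys two things the paper glosses over: (i) the recentering step (via central symmetry and the parallelogram law) justifying that the circumradius --- defined as the radius of the smallest enclosing ball with \emph{arbitrary} center --- equals $\sup\{\|\v\|_2: f(\v)\le 1\}$, which the paper asserts without comment; and (ii) strict positivity of the infimum (via compactness and positive-definiteness of $f$ on $\Range(\D)$), which the paper needs implicitly for its optimizers to exist and its denominators $\|\D^\top\w^\ast\|_{\infty,2}$, $\|\v^\ast\|_2$ to be nonzero. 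The paper's version, following the sparse-recovery argument it cites, is more elementary and computational; yours is more self-contained and rigorous at these two boundary points.
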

\begin{proof}
Our proof is based on that of the relevant result for the sparse recovery given in \cite{Robinson:Arxiv2019}.
The covering radius  $\gamma_{1,2}(\Dcal)$ is defined as 
\begin{equation}
\gamma_{1,2}(\pm \Dcal) = \sup\{\theta_{1,2}(\v,\pm \Dcal), \v\in \Sb^{n-1}\cap \mathrm{span}(\Dcal) \} = \sup_{\v\in \Sb^{n-1}\cap \mathrm{span}(\Dcal)}\{\cos^{-1}\left(\frac{1}{\sqrt{m}}\|\D^{\top}\v\|_{\infty,2}\right)\}
\end{equation}
Getting the cosine of $\gamma_{1,2}(\pm \Dcal)$ we have,
\begin{equation}
\cos\left( \gamma_{1,2}(\pm \Dcal) \right) = \cos\left( \sup_{\v\in \Sb^{n-1}\cap \mathrm{span}(\Dcal)}\{\cos^{-1}\left(\frac{1}{\sqrt{m}}\| \D^{\top}\v\|_{\infty,2}\right)\} \right) = \inf_{\v\in \Sb^{n-1}\cap \mathrm{span}(\Dcal)}~\frac{1}{\sqrt{m}}\|\D^{\top}\v\|_{\infty,2}
\label{lem_1_eq1}
\end{equation}
The circumradius $R(\mathcal{K}^{o}_{\ell_{1,2}}(\pm \Dcal))$ of the  relative polar set  $\mathcal{K}^{o}_{\ell_{1,2}}(\pm \Dcal)$ is given by,
\begin{equation}
R(\mathcal{K}^{o}_{\ell_{1,2}}(\pm \Dcal)) = \sup\{\|\v\|_{2}:\frac{1}{\sqrt{m}} \|\D^{\top}\v\|_{\infty,2}\leq 1, \v\in \mathrm{span}(\Dcal)\}
\label{lem_1_eq2}
\end{equation}
We want to prove that,
\begin{equation}
 \inf_{\v\in \Sb^{n-1}\cap \mathrm{span}(\Dcal)}~\frac{1}{\sqrt{m}}\|\D^{\top}\v\|_{\infty,2} =  \frac{1}{\sup\{\|\v\|_{2}: \frac{1}{\sqrt{m}}\| \D^{\top}\v\|_{\infty,2}\leq 1, \v\in \mathrm{span}(\Dcal)\}}
 \label{lem_1_eq3}
\end{equation}
Let   $\w^{\ast}$ and $\v^{\ast}$  be optimal solutions of the optimization problems appearing at the LHS and RHS of \eqref{lem_1_eq3}, respectively. Let us now define $\bar{\v} = \frac{\sqrt{m}\w^{\ast}}{\|\D^{\top}\w^{\ast}\|_{\infty,2}}$ and $\bar{\w} = \frac{\v^*}{\|\v^{\ast}\|_{2}}$. We have that $\|\w^{\ast}\|_{2}=1$ and $\bar{\v}$ satisfies the constraints appearing the optimization problem at the RHS of  \eqref{lem_1_eq3} i.e., $\frac{1}{\sqrt{m}}\|\D^{\top}\bar{\v}\|_{\infty,2}\leq 1$ and $\bar{\v}\in \mathrm{span}(\Dcal)$. 
Hence, we will have,
\begin{equation}
 \|\bar{\v}\|_{2} =  \frac{\sqrt{m}\|\w^{\ast}\|_{2}}{\|\D^{\top}\w^{\ast}\|_{\infty,2}} = \frac{\sqrt{m}}{\|\D^{\top}\w^{\ast}\|_{\infty,2}} \leq \|\v^{\ast}\|_{2}
 \label{lem_1_eq4}
\end{equation}
where the last inequality arises by the fact that $\bar{\v}$ is a feasible but not optimal solution of the problem at the RHS of \eqref{lem_1_eq3}.
Moreover, for $\bar{\w} = \frac{\v^{\ast}}{\|\v^{\ast}\|_{2}}$ we have that $\bar{\w}\in \Sb^{n-1}$ and $\bar{\w}\in \mathrm{span}(\Dcal)$. Therefore, $\bar{\w}$ satisfies the constraints and it will be a feasible solution of of the optimization problem at the LHS of \eqref{lem_1_eq3}. From that we can deduce that
\begin{equation}
\frac{1}{\sqrt{m}}\|\D^{\top}\bar{\w}\|_{\infty,2} = \frac{1}{\sqrt{m}}\frac{\|\D^{\top}\v^{\ast}\|_{\infty,2}}{\|\v^{\ast}\|_{2}} \leq \frac{1}{\|\v^{\ast}\|_{2}}
\end{equation}
From optimality of $\w^*$ at the LHS of \eqref{lem_1_eq3} we will have
\begin{equation}
\frac{1}{\|\v^{\ast}\|_{2}} \geq \frac{1}{\sqrt{m}} \|\D^{\top}{\w^{\ast}}\|_{\infty,2} 
 \rightarrow
\frac{\sqrt{m}}{  \|\D^{\top}{\w^{\ast}}\|_{\infty,2}} \geq \|\v^{\ast}\|_{2}
\label{lem_1_eq5}
\end{equation}

By combining \eqref{lem_1_eq4} and \eqref{lem_1_eq5} we get the result.
 \end{proof}

\subsection{Proof of Theorem \ref{the:prc_cond}}
Without loss of generality for the proofs of theorems \ref{the:prc_cond} and \ref{the:drc_cond} we scale the dictionaries $\Ds,\Da$ by $\frac{1}{\sqrt{m}}$, where $m$ is the size of the blocks. The primal problem denoted as $P(\frac{1}{\sqrt{m}}\Ds,\frac{1}{\sqrt{m}}\Da,\x')$ is given as,
\begin{equation}
P(\frac{1}{\sqrt{m}}\Ds,\frac{1}{\sqrt{m}}\Da,\x') \coloneqq \argmin \|\cs\|_{1,2} + \|\ca\|_{1,2}~~~\st \x' = \frac{1}{\sqrt{m}}\Ds\cs + \frac{1}{\sqrt{m}}\Da\ca
\label{eq:primal}
\end{equation}
and the dual of \eqref{eq:primal},
\begin{equation}
D(\frac{1}{\sqrt{m}}\Ds,\frac{1}{\sqrt{m}}\Da,\x') \coloneqq \argmax \langle \w, \x' \rangle ~~~\st  \|[\frac{1}{\sqrt{m}}\Ds,\frac{1}{\sqrt{m}}\Da]^{\top}\w\|_{\infty,2}\leq 1
\label{eq:dual}
\end{equation}
where $\w$ is the dual variable.
\begin{proof}
We will prove the theorem by showing that the condition 
\begin{equation}
\gamma_{1,2}(\pm \Dcal_{i^{\ast}j^{\ast}})  < \theta_{1,2}(\mathcal{S}_{i^{\ast}}\cup\mathcal{S}_{i^{\ast} j^{\ast}}, \pm \Dcal_{i^{\ast}j^{\ast}}^{-})
\end{equation}
implies the necessary and sufficient condition of Proposition 1, i.e., 
\begin{equation}
\|\csh\|_{1,2} + \|\cah\|_{1,2} < \|\cst\|_{1,2} + \|\cat\|_{1,2}
\label{theor_1:nsfcond}
\end{equation}

Let  us focus on $\frac{1}{\sqrt{m}}\Ds[i^{\ast}],\frac{1}{\sqrt{m}}\Da[i^{\ast}][j^{\ast}]$ and denote as $p(\frac{1}{\sqrt{m}}\Ds[i^{\ast}],\frac{1}{\sqrt{m}}\Da[i^{\ast}][j^{\ast}],\x'), d(\frac{1}{\sqrt{m}}\Ds[i^{\ast}],\frac{1}{\sqrt{m}}\Da[i^{\ast}][j^{\ast}],\x')$  the values of the objective functions of the primal and dual problems, respectively. Due to convexity, strong duality holds, hence we have,
\begin{equation}
p(\frac{1}{\sqrt{m}}\Ds[i^{\ast}],\frac{1}{\sqrt{m}}\Da[i^{\ast}][j^{\ast}]\x') = d(\frac{1}{\sqrt{m}}\Ds[i^{\ast}],\frac{1}{\sqrt{m}}\Da[i^{\ast}][j^{\ast}],\x') = \langle \w,\x' \rangle
\label{eq:theor_1}
\end{equation}
Let us now decompose the dual variable $\w\in \Rb^{n}$ as $\w = \w^{\perp} + \w^{\parallel}$, where $\w^{\parallel}\in \Scal_{i^{\ast}}\cup \Scal_{i^{\ast}j^{\ast}}$ and $\w^{\perp} \perp \w^{\parallel}$. For \eqref{eq:theor_1} we have,
\begin{equation}
\begin{split}
p(\frac{1}{\sqrt{m}}\Ds[i^{\ast}],\frac{1}{\sqrt{m}}\Da[i^{\ast}][j^{\ast}],\x') = d(\frac{1}{\sqrt{m}}\Ds[i^{\ast}],\frac{1}{\sqrt{m}}\Da[i^{\ast}][j^{\ast}],\x') = \\
\langle \w,\x' \rangle = \langle \w^{\parallel},\x'\rangle \leq \|\w^{\parallel}\|_{2}\|\x'\|_{2} \leq \|\x'\|_{2}\frac{1}{\cos(\gamma_{1,2}(\pm \Dcal_{i^{\ast}j^{\ast}}))}
\end{split}
\label{eq:theor_1b}
\end{equation}
where the last inequality follows from Lemma \eqref{lem:circum_covering_rad}, by taking into account that $\w^{\parallel}$ a) belongs to the dual polar set $\Kcal^{o}_{\ell_{1,2}}(\Dcal_{i^{\ast}j^{\ast}})$ b) $\w^{\parallel}\in\Scal_{i^{\ast}}\cup\Scal_{i^{\ast}j^{\ast}}$ and hence is a feasible solution of the optimization problem at the RHS of \eqref{lem_1_eq3}.

Let us now focus the primal problem,
\begin{equation}
p(\frac{1}{\sqrt{m}}\Ds^{-},\frac{1}{\sqrt{m}}\Da^{-}\x') = \min_{\cs,\cs} \|\cs\|_{1,2} + \|\ca\|_{1,2} \st \x' = \frac{1}{\sqrt{m}}\Ds^{-}\cs + \frac{1}{\sqrt{m}}\Da^{-}\ca
\end{equation}
and assume that there exist solutions $\cs^{\ast},\ca^{\ast}\in P(\frac{1}{\sqrt{m}}\Ds^{-},\frac{1}{\sqrt{m}}\Da^{-}\x'))$ such that $\x' = \Ds^{-}\cs^{\ast} + \Da^{-}\ca^{\ast}$. We will have,
\begin{equation}
\begin{split}
\|\x'\|^{2}_{2} = \x'^{\top}\left(\frac{1}{\sqrt{m}}\Ds^{-}\cs^{\ast} + \frac{1}{\sqrt{m}}\Da^{-}\ca^{\ast}  \right) \\
\leq \|\Ds^{-,\top}\frac{\x'}{\|\x'\|_{2}}\|_{\infty,2} \|\x'\|_{2}\|\cs^{\ast}\|_{1,2} + \|\Da^{-,\top}\frac{\x'}{\|\x'\|_{2}}\|_{\infty,2}\|\x'\|_{2}\|\ca^{\ast}\|_{1,2} \\
\leq \cos(\theta_{1,2}(\frac{\x'}{\|\x'\|_{2}},\Dcal^{-}_{i^{\ast},j^{\ast}}))\|\x'\|_{2}\underbrace{\left(\|\cs\|_{1,2} + \|\ca\|_{1,2} \right)}_{p(\frac{1}{\sqrt{m}}\Ds^{-},\frac{1}{\sqrt{m}}\Da^{-}\x') }\\
\rightarrow p(\frac{1}{\sqrt{m}}\Ds^{-},\frac{1}{\sqrt{m}}\Da^{-}\x') \geq \frac{\|\x'\|_{2}}{\cos(\theta_{1,2}(\frac{\x'}{\|\x'\|_{2}},\Dcal^{-}_{i^{\ast},j^{\ast}}))}
\end{split}
\label{eq:theor_1c}
\end{equation}
By combining \eqref{eq:theor_1b} with \eqref{eq:theor_1c} we get,
\begin{equation}
\begin{split}
\|\x'\|_{2}\frac{1}{\cos(\gamma_{1,2}(\pm \Dcal_{i^{\ast}j^{\ast}}))} < \frac{\|\x'\|_{2}}{\cos(\theta_{1,2}(\frac{\x'}{\|\x'\|_{2}},\Dcal^{-}_{i^{\ast},j^{\ast}}))} \rightarrow \\
\gamma_{1,2}(\pm \Dcal_{i^{\ast}j^{\ast}})) < \theta_{1,2}(\frac{\x'}{\|\x'\|_{2}},\Dcal^{-}_{i^{\ast},j^{\ast}}))
\end{split}
\end{equation}
and hence the last inequality is a sufficient condition for \eqref{theor_1:nsfcond}.
\end{proof}

\subsection{Proof of Theorem \ref{the:drc_cond}}
We first prove prove the following Lemma.

\begin{lemma}
If the Dual Recovery Condition holds i.e., $\gamma_{1,2}(\Dcal_{i^{\ast}j^{\ast}}) <  \theta_{1,2}\left(\mathcal{A}\left(\Dcal_{i^{\ast}j^{\ast}}\right), \pm \Dcal_{i^{\ast}j^{\ast}}^{-}\right)$ then $\forall\v\in \mathcal{A}(\Dcal_{i^{\ast}j^{\ast}})$ it holds
$\frac{1}{\sqrt{m}}\|[\D_{s}^{-,\top}, \D_{a}^{-,\top}]\v\|_{\infty,2} < 1$.
\label{lem:the_drc_1}
\end{lemma}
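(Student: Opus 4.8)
The plan is to prove the bound pointwise over the finite set of dual points $\mathcal{A}(\Dcal_{i^{\ast}j^{\ast}})$ by unwinding the definitions of the angular distance $\theta_{1,2}$ and the covering radius $\gamma_{1,2}$, and then feeding the result into Lemma~\ref{lem:circum_covering_rad}. The whole argument is a short chain of inequalities, where the only genuinely structural input is the identification of $\mathcal{A}(\Dcal_{i^{\ast}j^{\ast}})$ with the extreme points of the polar set.

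First I would fix an arbitrary dual point $\v\in\mathcal{A}(\Dcal_{i^{\ast}j^{\ast}})$. Since the set-valued angular distance $\theta_{1,2}(\mathcal{A}(\Dcal_{i^{\ast}j^{\ast}}),\pm\Dcal^{-}_{i^{\ast}j^{\ast}})$ is by definition the infimum of $\theta_{1,2}(\v,\pm\Dcal^{-}_{i^{\ast}j^{\ast}})$ over all such $\v$, the DRC hypothesis gives
\[
\theta_{1,2}(\v,\pm\Dcal^{-}_{i^{\ast}j^{\ast}}) \ \geq\ \theta_{1,2}(\mathcal{A}(\Dcal_{i^{\ast}j^{\ast}}),\pm\Dcal^{-}_{i^{\ast}j^{\ast}}) \ >\ \gamma_{1,2}(\Dcal_{i^{\ast}j^{\ast}}).
\]
Recalling that $\theta_{1,2}(\v,\pm\Dcal^{-}_{i^{\ast}j^{\ast}})=\cos^{-1}\!\big(\tfrac{1}{\sqrt{m}}\|[\D_{s}^{-,\top},\D_{a}^{-,\top}]\tfrac{\v}{\|\v\|_{2}}\|_{\infty,2}\big)$ and that $\cos^{-1}$ is strictly decreasing, I would apply $\cos$ to both ends, which reverses the inequality, to obtain
\[
\tfrac{1}{\sqrt{m}}\big\|[\D_{s}^{-,\top},\D_{a}^{-,\top}]\tfrac{\v}{\|\v\|_{2}}\big\|_{\infty,2} \ <\ \cos\!\big(\gamma_{1,2}(\Dcal_{i^{\ast}j^{\ast}})\big).
\]

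Next I would rewrite the right-hand side using Lemma~\ref{lem:circum_covering_rad}, which identifies $\cos(\gamma_{1,2}(\pm\Dcal_{i^{\ast}j^{\ast}}))$ with $1/R(\Kcal^{o}_{\ell_{1,2}}(\pm\Dcal_{i^{\ast}j^{\ast}}))$, the reciprocal of the circumradius of the polar set. Multiplying the previous display through by $\|\v\|_{2}$ and using the positive homogeneity of the $\ell_{\infty,2}$ norm yields
\[
\tfrac{1}{\sqrt{m}}\big\|[\D_{s}^{-,\top},\D_{a}^{-,\top}]\v\big\|_{\infty,2} \ <\ \frac{\|\v\|_{2}}{R(\Kcal^{o}_{\ell_{1,2}}(\pm\Dcal_{i^{\ast}j^{\ast}}))}.
\]

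The remaining and conceptually central step is the bound $\|\v\|_{2}\leq R(\Kcal^{o}_{\ell_{1,2}}(\pm\Dcal_{i^{\ast}j^{\ast}}))$, which is exactly where the definition of $\mathcal{A}(\Dcal_{i^{\ast}j^{\ast}})$ as the extreme points of the polar set is used: each such $\v$ lies in $\Kcal^{o}_{\ell_{1,2}}(\pm\Dcal_{i^{\ast}j^{\ast}})$, and since this polar set is centrally symmetric its smallest enclosing ball is centered at the origin, so by the definition of the circumradius every point of the set, and hence every extreme point, has $\ell_{2}$ norm at most $R$ (equivalently, this is the sup-characterization $R=\sup\{\|\v\|_{2}:\tfrac{1}{\sqrt{m}}\|\D^{\top}\v\|_{\infty,2}\leq 1,\ \v\in\mathrm{span}(\Dcal_{i^{\ast}j^{\ast}})\}$ appearing in the proof of Lemma~\ref{lem:circum_covering_rad}). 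Substituting $\|\v\|_{2}\leq R$ into the last display cancels the circumradius and gives $\tfrac{1}{\sqrt{m}}\|[\D_{s}^{-,\top},\D_{a}^{-,\top}]\v\|_{\infty,2}<1$; since $\v$ was arbitrary, the claim holds for all dual points. I do not expect a real obstacle here beyond the bookkeeping of checking that the polar set is compact, so that $R$ is finite and its extreme points genuinely belong to the set, which follows because restricting to $\mathrm{span}(\Dcal_{i^{\ast}j^{\ast}})$ makes $\v\mapsto\D^{\top}\v$ injective and hence the polar set bounded.
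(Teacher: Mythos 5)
Your proof is correct and follows essentially the same route as the paper's: both arguments bound $\|\v\|_{2}$ by the circumradius $1/\cos(\gamma_{1,2}(\Dcal_{i^{\ast}j^{\ast}}))$ via Lemma~\ref{lem:circum_covering_rad} and membership of the dual point in the polar set, then combine this with the pointwise consequence of the DRC (through the infimum definition of the set-valued angular distance) to conclude $\frac{1}{\sqrt{m}}\|[\D_{s}^{-},\D_{a}^{-}]^{\top}\v\|_{\infty,2}=\cos(\theta_{1,2}(\v,\pm\Dcal^{-}_{i^{\ast}j^{\ast}}))\,\|\v\|_{2}<1$. The only difference is the order in which the two inequalities are applied, plus your added (harmless) bookkeeping about compactness of the polar set, which the paper leaves implicit.
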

\begin{proof}
We have that $\forall \v\in \mathcal{A}(\Dcal_{i^{\ast}j^{\ast}})$ it holds $\frac{1}{\sqrt{m}}\|[\D_{s}[i^{\ast}],\D_{a}[i^{\ast}][j^{\ast}]]^{\top}\v\|_{\infty,2}\leq 1$. Hence, due to Lemma \ref{lem:circum_covering_rad} we have that $\|\v\|_{2} \leq \frac{1}{\cos(\gamma_{1,2}(\Dcal_{i^{\ast}j^{\ast}})}$. We will have,
\begin{equation}
\frac{1}{\sqrt{m}}\|[\D_{s}^{-}, \D_{a}^{-}]^{\top}\v\|_{\infty,2} = \frac{1}{\sqrt{m}}\|[\D_{s}^{-}, \D_{a}^{-}]^{\top}\frac{\v}{\|\v\|_{2}}\|_{\infty,2}\|\v\|_{2} \leq \frac{\cos(\theta_{1,2}(\v,\Dcal^{-}_{i^{\ast}j^{\ast}}))}{\cos(\gamma_{1,2}(\Dcal_{i^{\ast}j^{\ast}}))} < 1
\end{equation}
\end{proof}
Next we will prove the following Lemma,
\begin{lemma}
If $\frac{1}{\sqrt{m}}\|[\D_{s}^{-,\top}, \D_{a}^{-,\top}]\v\|_{\infty,2} < 1$ $\forall \v\in \mathcal{A}(\Dcal_{i^{\ast}j^{\ast}})$ then the necessary and sufficient condition for successful recovery of the correct class of the signal and the attack given in \eqref{prop:nec_suf_rec} i.e, 
$p(\frac{1}{\sqrt{m}}\Ds[i^{\ast}],\frac{1}{\sqrt{m}}\Da[i^{\ast}][j^{\ast}],\x') < p(\frac{1}{\sqrt{m}}\Ds^{-},\frac{1}{\sqrt{m}}\Da^{-},\x') $ holds.
\label{lem:the_drc_2}
\end{lemma}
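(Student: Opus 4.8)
The plan is to establish the desired strict inequality
$p(\tfrac{1}{\sqrt{m}}\Ds[i^{\ast}],\tfrac{1}{\sqrt{m}}\Da[i^{\ast}][j^{\ast}],\x') < p(\tfrac{1}{\sqrt{m}}\Ds^{-},\tfrac{1}{\sqrt{m}}\Da^{-},\x')$ entirely through Lagrangian duality, by using the lemma's hypothesis on the dual points to certify a feasible point of the \emph{wrong-block} dual whose objective strictly exceeds the \emph{correct-block} optimum. First I would invoke strong duality (which holds by convexity, exactly as was already used to pass between \eqref{eq:primal} and \eqref{eq:dual}) for the correct-block primal problem, so that its optimal value equals $\langle \w^{\ast},\x'\rangle$ for some optimal dual variable $\w^{\ast}$. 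Since both the objective $\langle \w,\x'\rangle$ (recall $\x'\in\Scal_{i^\ast}\oplus\Scal_{i^\ast j^\ast}$) and the dual constraint $\tfrac{1}{\sqrt{m}}\|[\Ds[i^{\ast}],\Da[i^{\ast}][j^{\ast}]]^{\top}\w\|_{\infty,2}\le 1$ depend only on the projection of $\w$ onto $\mathrm{span}(\Dcal_{i^{\ast}j^{\ast}})$, I may assume $\w^{\ast}\in\mathrm{span}(\Dcal_{i^{\ast}j^{\ast}})$, so that $\w^{\ast}$ lies in the polar body $\Kcal^{o}_{\ell_{1,2}}(\Dcal_{i^{\ast}j^{\ast}})$.

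The crucial structural step is to argue that the optimal dual variable can be chosen to be an extreme point of this polar body, i.e.\ a dual point $\w^{\ast}\in\mathcal{A}(\Dcal_{i^{\ast}j^{\ast}})$. This follows because the polar body is a compact convex set within the span --- closed by definition and bounded since its circumradius is finite by Lemma \ref{lem:circum_covering_rad} --- and a linear functional attains its maximum over a compact convex set at an extreme point (Bauer's maximum principle). With $\w^{\ast}$ thus identified as a dual point, I can apply the lemma's hypothesis directly to obtain $\beta := \tfrac{1}{\sqrt{m}}\|[\Ds^{-},\Da^{-}]^{\top}\w^{\ast}\|_{\infty,2} < 1$.

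From here the conclusion is a short scaling argument. Assuming the wrong-block problem is feasible (otherwise its value is $+\infty$ and the inequality is trivial), I would note that $\w^{\ast}/\beta$ meets the wrong-block dual constraint with equality, hence is dual-feasible, so by strong duality for the wrong-block pair
\begin{equation}
p(\tfrac{1}{\sqrt{m}}\Ds^{-},\tfrac{1}{\sqrt{m}}\Da^{-},\x') \ge \Big\langle \tfrac{\w^{\ast}}{\beta},\x'\Big\rangle = \tfrac{1}{\beta}\,\langle \w^{\ast},\x'\rangle = \tfrac{1}{\beta}\, p(\tfrac{1}{\sqrt{m}}\Ds[i^{\ast}],\tfrac{1}{\sqrt{m}}\Da[i^{\ast}][j^{\ast}],\x').
\end{equation}
Because $\x\neq\mathbf{0}$ forces $\x'\neq\mathbf{0}$ and hence $\langle \w^{\ast},\x'\rangle = p(\tfrac{1}{\sqrt{m}}\Ds[i^{\ast}],\tfrac{1}{\sqrt{m}}\Da[i^{\ast}][j^{\ast}],\x')>0$, while $\beta<1$ gives $1/\beta>1$, the strict inequality follows, which is precisely condition \eqref{prop:nec_suf_rec} of Proposition \ref{propos}. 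The main obstacle I anticipate is the justification that an optimal dual solution lies in $\mathcal{A}(\Dcal_{i^{\ast}j^{\ast}})$: one must carefully reduce to the span, confirm compactness of the polar body there, and rule out the degenerate case $\beta=0$ (where $\w^{\ast}$ is orthogonal to every remaining atom, making the wrong-block dual unbounded and again forcing the value $+\infty$). The remaining manipulations are routine.
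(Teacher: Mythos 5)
Your proposal is correct and takes essentially the same route as the paper's proof: strong duality for the correct-block pair, reduction of the dual variable to $\mathrm{span}(\Dcal_{i^{\ast}j^{\ast}})$, selection of a dual optimum among the extreme points of the polar body (your explicit appeal to Bauer's maximum principle is what the paper calls ``standard convex optimization arguments''), and then application of the hypothesis on the remaining atoms. Your final step of rescaling $\w^{\ast}/\beta$ and invoking weak duality for the wrong-block pair is an equivalent repackaging of the paper's direct H\"older bound $\langle \w^{\ast},\x'\rangle \le \frac{1}{\sqrt{m}}\|[\Ds^{-},\Da^{-}]^{\top}\w^{\ast}\|_{\infty,2}\left(\|\cs\|_{1,2}+\|\ca\|_{1,2}\right) < p(\frac{1}{\sqrt{m}}\Ds^{-},\frac{1}{\sqrt{m}}\Da^{-},\x')$, with the minor added benefit that you handle the degenerate cases ($\beta=0$, infeasibility of the wrong-block problem, and $\x'\neq\mathbf{0}$) explicitly.
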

\begin{proof}
Let us define the following constrained optimization problem,
\begin{equation}
\max \langle \x',\w \rangle ~~\st \frac{1}{\sqrt{m}}\|[\D_{s}[i^{\ast}],\D_{a}[i^{\ast}][j^{\ast}]]^{\top},\w\|_{\infty,2}\leq 1, \w\in \mathrm{span}(\Dcal_{i^{\ast}j^{\ast}})
\label{the:drc_1}
\end{equation}
Using standard convex optimization arguments we can deduce that the optimal solution of the above problem $\w$  will be an extreme point of the convex set defined by $\{\w :  \frac{1}{\sqrt{m}}\|[\D_{s}[i^{\ast}],\D_{a}[i^{\ast}][j^{\ast}]]^{\top}\w\|_{\infty,2}\leq 1, \w\in \mathrm{span}(\Dcal_{i^{\ast}j^{\ast}}) \}$ hence 
$\w$ will belong to the set of dual points $\mathcal{A}(\Dcal_{i^{\ast}j^{\ast}})$. Let us now state the following problem,
\begin{equation}
\max \langle \x',\w \rangle ~~\st \frac{1}{\sqrt{m}}\|[\D_{s}[i^{\ast}],\D_{a}[i^{\ast}][j^{\ast}]]^{\top}\w\|_{\infty,2}\leq 1, 
\label{the:drc_2}
\end{equation}
Note that \eqref{the:drc_2} does not constrain $\w$ to belong in $\mathrm{span}(\Dcal_{i^{\ast}j^{\ast}})$. As a result, there might be optimal solutions $\w$ not in $\mathrm{span}(\Dcal_{i^{\ast}j^{\ast}})$. However, we can deduce that there will always exist a 
$\w \ \mathrm{span}(\Dcal_{i^{\ast}j^{\ast}})$ that will be an optimal solution and a dual point. This can be deduced if we express a candidate solution $\w^\ast$ as $\w^\ast = \w^{\perp} + \w^{\parallel}$ where $\w^{\parallel}\in \mathrm{span}(\Dcal_{i^{\ast}j^{\ast}})$. 

Let us now assume that there exists a $\{\cs,\ca\} \in  P(\frac{1}{\sqrt{m}}\Ds^{-},\frac{1}{\sqrt{m}}\Da^{-},\x')$. We will have $\x' = \frac{1}{\sqrt{m}}\Ds^{-} \cs + \frac{1}{\sqrt{m}}\Da^{-}\ca$. On the other hand, there will be $\w^{\ast}\in \mathcal{A}(\Dcal_{i^{\ast}j^{\ast}})$ that will be a dual optimal solution of $D(\frac{1}{\sqrt{m}}\Ds[i^{\ast}],\frac{1}{\sqrt{m}}\Da[i^{\ast}][j^{\ast}],\x')$ i.e.,  
\begin{equation}
\begin{split}
p(\frac{1}{\sqrt{m}}\Ds[i^{\ast}],\frac{1}{\sqrt{m}}\Da[i^{\ast}][j^{\ast}],\x' ) = d(\frac{1}{\sqrt{m}}\Ds[i^\ast],\frac{1}{\sqrt{m}}\Da[i^\ast][j^\ast],\x') = \langle \w^{\ast}, \x'\rangle = \langle \w^{\ast}, \frac{1}{\sqrt{m}}(\Ds^{-} \cs + \Da^{-}\ca)\rangle \leq \\
 \frac{1}{\sqrt{m}}\|[\Ds^{-},\Da^{-}]^{\top}\w^{\ast}\|_{\infty,2}\left(\|\cs\|_{1,2} + \|\ca\|_{1,2}\right) < p(\frac{1}{\sqrt{m}}\Ds^{-},\frac{1}{\sqrt{m}}\Da^{-},\x' )
\end{split}
\end{equation}
\end{proof}

Theorem \ref{the:drc_cond} is proved by combining Lemmas \ref{lem:the_drc_1} and \ref{lem:the_drc_2}.

\section{Derivation of the Active Set Homotopy Algorithm and Algorithm Details}

\begin{algorithm}[h]
\caption{Active Set Homotopy Algorithm}
\label{alg:active_set}
\begin{algorithmic}
\STATE{Results:$\cah,\csh$ }
\STATE{ $\mathrm{Initialize:} \; \csht^0,\caht^0 \gets \mathbf{0}, \mathbf{0}, T_s^0, T_a^0 \gets \emptyset, \emptyset, k \gets 1$} 
\STATE{$\mathrm{Set:} \; \gamma \in (0, 1)$} 
\WHILE{$T_s^{k + 1} \not\subseteq T_s^k$ and $T_a^{k + 1} \not\subseteq T_a^k$} 
  \STATE{$\o^k \gets \x' - \Ds[T_s^k] \csht^k[T_s^k] - \Da[T_a^k] \caht^k [T_a^k]$} 
   \STATE{ $\lambda_s^k \gets \gamma \cdot \max_i \norm{\Ds[i]^T \o^k}{2}$}
    \STATE{$\lambda_a^k \gets \gamma \max_{i, j} \norm{\Da[i][j]^T \o^k}{2}$} 
    \STATE{$\hat{i}^k \gets \argmax_{i} \norm{\Da[i]^T \o^k}{2}$}
    \STATE{$\hat{j}^k \gets \argmax_j \max_i \norm{\Da[i][j]^T \o^k}{2}$} 
   \STATE{Add $\hat{i}^k$ and $\hat{j}^k$ to $T_s^k$ and $T_a^k$ respectively.}
    \STATE{Solve problem \eqref{eq:reg_objective} with any solver using $\Ds[T_s^k], \Da[T_a^k], \lambda_s^k, \lambda_a^k$ and compute $\csht^{k + 1}$ and $\caht^{k + 1}$.} 
   \STATE{$k \gets k + 1$}
    \ENDWHILE
\end{algorithmic}
\label{alg:activeset_homotopy}
\end{algorithm}
 
Consider the optimization problem in Equation \eqref{eq:reg_objective}. We denote the objective as $L(\x', \Ds, \Da, \lambda_s, \lambda_a)$. We can write the optimality conditions with respect to $\cs$ and $\ca$ for this problem. For any block $i$ of $\Ds$, the optimality conditions with respect to $\cs$ are:

\begin{align}
    \label{eq:cs_opt1} \Ds[i]^T(\x' - \Ds \cs^* - \Da \ca^*)  = \lambda_s \frac{\cs^*}{\norm{\cs^*}{2}} \qquad &\text {if } \cs^*[i] \neq 0 \\
    \label{eq:cs_opt2} \norm{\Ds[i]^T(\x' - \Ds \cs^* - \Da \ca^*)}{2} \leq \lambda_s \qquad &\text {if } \cs^*[i] = 0
\end{align}

For any block $(i, j)$ of $\Da$, the optimality conditions with respect to $\ca$ are:

\begin{align}
    \label{eq:ca_opt1} \Da[i][j]^T(\x' - \Ds \cs^* - \Da \ca^*)  = \lambda_2 \frac{\ca^*}{\norm{\ca^*}{2}} \qquad &\text {if } \ca^*[i][j] \neq 0 \\
    \label{eq:ca_opt2} \norm{\Da[i][j]^T(\x' - \Ds \cs^* - \Da \ca^*)}{2} \leq \lambda_2 \qquad &\text {if } \ca^*[i][j] = 0
\end{align}

First, we derive a value of $\lambda_s$ and $\lambda_s$ such that the optimal $\cs$ and $\ca$ are the all-zero vectors.

\begin{lemma}
    Let $\lambda_s \geq \norm{\Ds^T \x'}{\infty, 2} = \sup_i \norm{\Ds[i]^T \x'}{2}$ and $\lambda_a \geq \norm{\Da^T \x'}{\infty, 2} = \sup_{i, j} \norm{\Da[i][j]^T \x'}{2}$. Then, the values of $\cs^*$ and $\ca^*$ that minimize $L(x', \Ds, \Da, \lambda_s, \lambda_a))$ are the all-zero vectors. 
\end{lemma}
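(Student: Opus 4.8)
The plan is to exploit convexity of the objective $L(\x',\Ds,\Da,\lambda_s,\lambda_a)$ in \eqref{eq:reg_objective}: because the quadratic data-fidelity term is smooth and convex and the two $\ell_1/\ell_2$ penalties are convex, $L$ is convex, and hence a feasible point is a \emph{global} minimizer if and only if the zero vector belongs to the subdifferential of $L$ at that point. Equivalently, the first-order optimality conditions are both necessary and sufficient for global optimality. These conditions have already been recorded in \eqref{eq:cs_opt1}--\eqref{eq:ca_opt2}, split according to whether each block of $\cs$ or $\ca$ vanishes. I would therefore simply verify that the candidate $\cs^*=\mathbf{0}$, $\ca^*=\mathbf{0}$ satisfies the ``vanishing-block'' branches \eqref{eq:cs_opt2} and \eqref{eq:ca_opt2} for every block, from which global optimality of the all-zero pair follows immediately.

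First I would substitute $\cs^*=\mathbf{0}$ and $\ca^*=\mathbf{0}$ into the residual, giving $\x'-\Ds\cs^*-\Da\ca^*=\x'$. Then \eqref{eq:cs_opt2} reduces to $\norm{\Ds[i]^\top\x'}{2}\le\lambda_s$ for each signal block $i$, and \eqref{eq:ca_opt2} reduces to $\norm{\Da[i][j]^\top\x'}{2}\le\lambda_a$ for each attack block $(i,j)$. By the hypotheses $\lambda_s\ge\sup_i\norm{\Ds[i]^\top\x'}{2}$ and $\lambda_a\ge\sup_{i,j}\norm{\Da[i][j]^\top\x'}{2}$, these inequalities hold simultaneously across all blocks. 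Hence the all-zero pair meets the optimality conditions and is a global minimizer of $L$.

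The only point requiring care is justifying the form of the vanishing-block conditions themselves, i.e., that the subdifferential of the block norm $\cs\mapsto\norm{\cs[i]}{2}$ at $\cs[i]=\mathbf{0}$ is the Euclidean unit ball $\{\mathbf{u}:\norm{\mathbf{u}}{2}\le 1\}$ (and analogously for the attack blocks). This is exactly what makes \eqref{eq:cs_opt2} the correct stationarity statement at a zero block: $\mathbf{0}$ is optimal for block $i$ precisely when the negative partial gradient of the smooth term, $-\Ds[i]^\top(\x'-\Ds\cs^*-\Da\ca^*)$, lies in $\lambda_s$ times that unit ball. Since the penalty is separable across blocks, the subdifferential of $L$ factorizes blockwise, so each block can be checked independently and there is no coupling to resolve — this is the mild obstacle, but it is handled entirely by the standard subdifferential of the Euclidean norm at the origin. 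I would finally note that the inequalities are non-strict, so the conclusion is that $\mathbf{0}$ is a (not necessarily unique) global minimizer; strict inequalities in the hypotheses would additionally yield uniqueness.
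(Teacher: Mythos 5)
Your proposal is correct and follows essentially the same route as the paper's proof: both verify that the all-zero pair satisfies the vanishing-block optimality conditions \eqref{eq:cs_opt2} and \eqref{eq:ca_opt2}, which at $\cs^*=\mathbf{0}$, $\ca^*=\mathbf{0}$ reduce (via the residual $\x'$) to exactly the hypotheses on $\lambda_s$ and $\lambda_a$. Your write-up is in fact slightly tighter than the paper's — you check both blocks jointly rather than assuming one is zero while verifying the other, you justify the zero-block condition via the subdifferential of the Euclidean norm at the origin, and you correctly flag that non-strict inequalities give a (not necessarily unique) global minimizer.
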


\begin{proof}
    We begin with the proof of showing that $\lambda_s = \norm{\Ds^T \x'}{\infty, 2}$ is sufficient so that $\cs^*$ is the all-zeroes vector. Looking at Equation \eqref{eq:cs_opt2}, we see that for a block of $\cs^*$ to be $0$, a sufficient condition is that the norm of the gradient of the fitting term of the objective is less than $\lambda_s$. This immediately gives that if for all blocks $i$,
    
    \begin{equation}
        \norm{\Ds[i]^T(\x' - \Ds \cs^* - \Da \ca^*)}{2} \leq \lambda_s
    \end{equation}
    
    then the optimal $\cs^*$ must be $0$ based on the optimality conditions. For simplicity in the proof, we will assume that the sufficient condition for $\ca$ to be the zero vector holds, which will be shown after. This implies that if $\lambda_s \geq \norm{\Ds^T \x'}{\infty, 2} = \sup_i \norm{\Ds[i]^T \x'}{2}$, then the $\cs^*$ that minimizes $L(\x', \Ds, \Da, \lambda_s, \lambda_a))$ is the all-zeros vector. The same argument applies for $\lambda_a$, for which we have that if $\lambda_a \geq \norm{\Da^T \x'}{\infty, 2} = \sup_{i, j} \norm{\Da[i][j]^T \x'}{2}$, then $\ca^*$ is the all-zeros vector. Jointly fixing both $\lambda_s$ and $\lambda_a$, we have a sufficient condition for $\cs^*$ and $\ca^*$ being zero. 
\end{proof}

The proof strategy of the above lemma suggests that if we knew the value of $\cs^*$ and $\ca^*$, then we can find a value of $\lambda_s$ and $\lambda_a$ such that minimizing $L(\x', \Ds, \Da, \lambda_s, \lambda_a))$ yields $\cs^*$ and $\ca^*$; however, obviously, we do not know the value of $\cs^*$ and $\ca^*$. Namely, the value of the regularization parameters depends on the residual $\x' - \Ds \cs^* - \Da \ca^*$, which we denote as $\o^*$ or the \textit{oracle point}. The homotopy algorithm for solving LASSO $\ell_1$ minimization problems proceeds by starting from the all-zeros solution and calculating the decrease in $\lambda$ that results in one non-zero element added to the support of the optimal solution. This works because the optimal solution plotted as a function of the regularization parameter is piecewise linear. For the block-sparse optimization problem, also known as group-LASSO, it is well-known that the solution path is nonlinear \cite{Yau:Stats2017}. Thus, we use the natural heuristic of starting with the value of $\lambda_s$ and $\lambda_a$ that produces the all-zero vector, scaling the value by some hyperparameter $\gamma \in (0, 1)$, and estimating the oracle point $\o$ by solving $L(\x', \Ds, \Da, \gamma \lambda_s, \gamma \lambda_a)$. From $\o$, we can then again calculate a value of $\lambda_s$ and $\lambda_a$ and iterate. This alternating algorithm forms the basis of the active set homotopy algorithm. Since we begin from the all-zero vector and reduce $\lambda_s$ and $\lambda_a$, we can maintain an active set of non-zero coordinates and only solve subproblems restricted to these non-zero blocks for efficiency purposes. In Algorithm \ref{alg:active_set}, we see the full algorithm detailed. Note that we overload the notation $\D_s[T_s]$ and $\D_a[T_a]$ to denote the submatrices of $\D_s$ and $\D_a$ corresponding to the block indices in the sets $T_s$ and $T_a$.To solve the subproblems, we use the cvxpy package \cite{Diamond:JMLR2016, Agrawal:JCD2018} with the SCS solver run for a maximum of 50 iterations.


\section{Experimental Details}

\begin{table}[h]
\begin{center}
\begin{tabular}{cc}
\hline
Layer Type & Size \\
\hline
Convolution + ReLU & $3 \times 3 \times 32$ \\
Convolution + ReLU & $3 \times 3 \times 32$ \\
Max Pooling & $2 \times 2$ \\
Convolution + ReLU & $3 \times 3 \times 64$ \\
Convolution + ReLU & $3 \times 3 \times 64$ \\
Max Pooling & $2 \times 2$ \\
Fully Connected + ReLU & $200$ \\
Fully Connected + ReLU & $200$ \\
Fully Connected + ReLU & $10$ \\
\hline
\end{tabular}
\end{center}
\caption{Network Architecture for the MNIST dataset}
\label{table:archs}
\end{table}

\subsection{MNIST}

The network architecture for the MNIST dataset is given in Table \ref{table:archs}. The network on MNIST is trained using SGD for $50$ epochs with learning rate $0.1$, momentum $0.5$, and batch size $128$.

All PGD adversaries were generated using the Advertorch library. The $\ell_\infty$ PGD adversary ($\epsilon = 0.3$) used a step size $\alpha = 0.01$ and was run for $100$ iterations. The $\ell_2$ PGD adversary ($\epsilon = 2$) used a step size $\alpha = 0.1$ and was run for $200$ iterations. The $\ell_1$ PGD adversary ($\epsilon = 10$) used a step size $\alpha = 0.8$ and was run for $100$ iterations. These hyperparameters are identical to the hyperparameters for the adversarial training baselines, to enable a fair comparison. 

\subsection{YaleB}

For the YaleB dataset, we train a three-layer fully-connected network, where each hidden layer contains 256 neurons followed by a ReLU activation. We train this network using SGD for 75 epochs with learning rate $0.05$, momentum $0.5$, and batch size $128$. All PGD adversaries were generated using the Advertorch library. The $\ell_\infty$ PGD adversary ($\epsilon = 0.1$) used a step size $\alpha = 0.003$ and was run for $100$ iterations. The $\ell_2$ PGD adversary ($\epsilon = 5$) used a step size $\alpha = 0.02$ and was run for $200$ iterations. The $\ell_1$ PGD adversary ($\epsilon = 15$) used a step size $\alpha = 1.0$ and was run for $100$ iterations.

\begin{figure*}[t!]
\begin{center}
   \includegraphics[width=\linewidth]{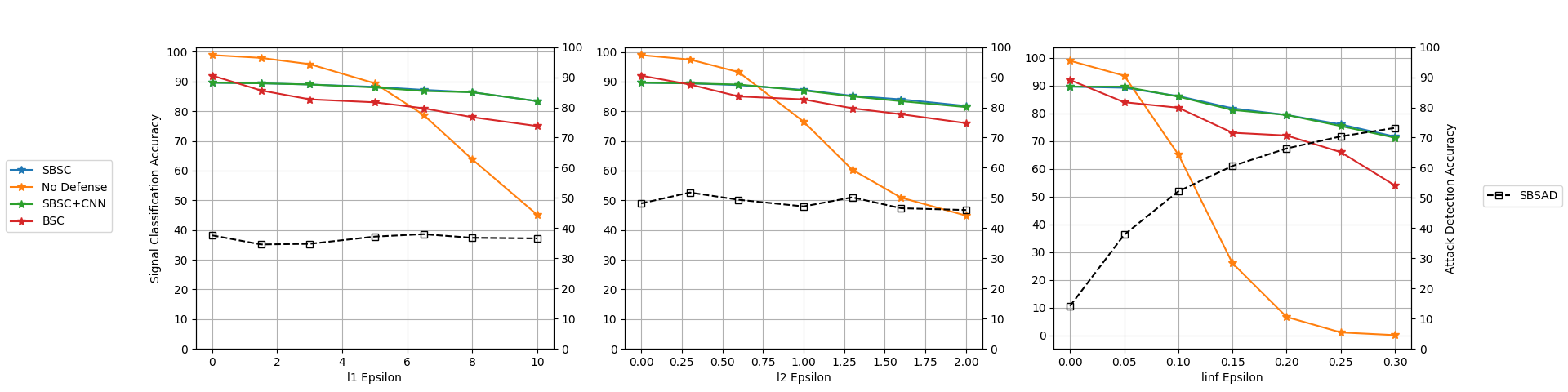}
\end{center}
   \caption{Results on the MNIST dataset with varying attack strength $\epsilon$. The SBSC and SBSC+CNN curves show the accuracy of the structured block-sparse classifier and denoiser at predicting the correct class. Note that these curves overlap almost fully, thus both are not clearly visible. BSC denotes the naive block-sparse baseline.  The SBSAD curve denotes the accuracy of the attack detector at predicting the correct attack type.  Best viewed in color.}
   \label{fig:lp_mnist}
\end{figure*}

\section{Extended Experiments on MNIST Dataset}

\myparagraph{Tradeoff between signal and attack detection} For $\ell_\infty$ attacks, the SBSC method does not serve as an effective defense compared to the baselines. One possible explanation for this phenomenon is the relationship between the SBSC and SBSAD methods, since both are predicted jointly from Algorithm 1. Specifically, in Figure \ref{fig:lp_mnist}, we see an explicit tradeoff between the choice of $\epsilon$ in terms of the signal classification and attack detection accuracy. As $\epsilon$ increases, we expect the attacks to be easier to distinguish among the family of attacks; however, as the noise increases, classifying the correct label becomes harder regardless of the accuracy of the predicted attack type. Note that in this figure, the dictionaries $\Ds$ and $\Da$ are kept fixed using the same $\epsilon$ values as in Table \ref{table:mnist_def}, and only the $\epsilon$ of the attacked test images is varied. Additionally, Figure \ref{fig:lp_mnist} demonstrates that explicitly modeling the perturbation and adding further structure to block-sparse optimization methods helps improve the accuracy of the block-sparse classifier, as our method outperforms the BSC method that only models $\x' = \Ds \cs$. As $\epsilon$ increases, the method remains robust to perturbations, while the accuracy of the undefended network continues to degrade.

\end{document}